\newtheorem{theorem}{Theorem}[section]
\newtheorem{lemma}[theorem]{Lemma}
\newtheorem{proposition}[theorem]{Proposition}
\newtheorem{assumption}[theorem]{Assumption}
\newtheorem{claim}[theorem]{Claim}
\newcommand{\wh}{\widehat}
\newcommand{\wt}{\widetilde}
\newcommand{\N}{\mathcal{N}}
\newcommand{\R}{\mathbb{R}}
\renewcommand{\P}{\mathbb{P}}
\newcommand{\I}{{\bf 1}}
\renewcommand{\varepsilon}{\epsilon}
\renewcommand{\tilde}{\wt}
\renewcommand{\hat}{\wh}
\DeclareMathOperator*{\E}{{\mathbb{E}}}
\DeclareMathOperator{\poly}{poly}
\newcommand*{\RN}[1]{\expandafter\@slowromancap\romannumeral #1@}
\begin{document}
\date{}
\title{Efficient Model-free Reinforcement Learning  in Metric Spaces}
\author{
Zhao Song\thanks{zhaosong@uw.edu. University of Washington. Part of the work was done while being hosted by Yin Tat Lee.}
\and
Wen Sun\thanks{wensun@andrew.cmu.edu. Carnegie Mellon University.}
}


\begin{titlepage}
 \maketitle
  \begin{abstract}

Model-free Reinforcement Learning (RL) algorithms such as Q-learning [Watkins, Dayan'92]
have been widely used in practice and can achieve human level performance in applications such as video games [Mnih et al.'15]. 
 Recently, equipped with the idea of optimism in the face of uncertainty, Q-learning algorithms [Jin, Allen-Zhu, Bubeck, Jordan'18] 
can be proven to be sample efficient for \emph{discrete tabular} Markov Decision Processes (MDPs) which have finite number of states and actions. In this work, we present an efficient model-free Q-learning based algorithm in MDPs with a natural metric on the state-action space---hence extending efficient model-free Q-learning algorithms to continuous state-action space. Compared to previous model-based RL algorithms for metric spaces [Kakade, Kearns, Langford'03], 
our algorithm does not require access to a black-box planning oracle.



  \end{abstract}
 \thispagestyle{empty}
 \end{titlepage}

\section{Introduction}

In Reinforcement Learning (RL), there are two families of algorithms: model-free RL algorithms (e.g., Q-learning \cite{watkins1992q}) and model-based RL ones (e.g., \cite{azar2017minimax}). While there are comparisons between these two families in terms of sample and computation efficiency (e.g., \cite{kearns1999finite,sun2018model,tu2018gap}), model-free methods are often popular in practice due to its simplicity for not requiring planning oracles and the surprising practical efficiency (e.g., policy gradient \cite{sutton2000policy,schulman2015trust} and Q-learning \cite{watkins1992q,mnih2015human}).

Recently, in discrete tabular MDPs settings (i.e., MDPs with finite number of states and actions), \cite{jabj18} show that equipped with optimism, classic Q-learning algorithms can balance the exploration and exploitation trade-off to  achieve regret bounds that are comparable  to the regret bounds of previous known model-based efficient RL algorithms (e.g., \cite{azar2017minimax}), while enjoying a better computational complexity and space complexity. Specifically, the algorithms in \cite{jabj18} do not require access to planning oracles anymore. However the algorithms in \cite{jabj18} relies on the discrete nature of the MDPs and cannot be directly used in continuous state-action space. 
 
In this work, we examine the problem of trading exploration and exploitation in MDPs where we have continuous state-action space with a natural metric,  \emph{under the model-free learning framework}. Previous works \cite{kakade2003exploration,ortner2012online,lakshmanan2015improved} considered efficient model-based RL in metric space where the proposed algorithm requires access to a planning oracle which itself could be a NP-hard problem in the continuous setting. The main assumption in our work is the property that ``nearby" state-action pairs have ``similar" optimal values. Such condition is common and indeed is a more general version of the smoothness assumptions on the transition dynamics and reward functions that were used in \cite{kakade2003exploration,ortner2012online,lakshmanan2015improved} (i.e., smoothness in transition dynamics and reward function implies smoothness in optimal value functions, but not the other way around).

We formalize these natural and general assumptions, and prove that even under the \emph{model-free setting}, they are sufficient for achieving near-optimal policies in an amount of time depending on the metric resolution, but not on the size of the state-action space. More specifically, we propose \emph{Net-based Q-learning} (\textsc{NbQl}), a Q-learning like algorithm under the principle of optimism under the face of uncertainty, which can learn a near-optimal policy with regret and space scaling with respect to the covering number---a natural and standard notion of the resolution under the metric. \textsc{NbQl} encourages efficient exploration by an approximate count-based strategy. Different from count-based exploration in tabular MDPs (e.g., \cite{brafman2002r,jabj18}) where one maintains visit counts for \emph{every} state-action pair, in large or continuous state-action space, we will not be able to afford to do so both computation-wise and space-wise. Instead, \textsc{NbQl} only maintains visitation counts over a subset of state-action pairs, which provides generalization to the entire state-action space via the underlying Lipschitz continuity assumption. Note that such kind of approximate count-based exploration strategy was empirically studied and could achieve state-of-art performance on several RL benchmarks \cite{tang2017exploration}. Hence our work can be regarded as providing a theoretical justification for such approximate count-based exploration strategy. \textsc{NbQl} works under the episodic finite horizon setting, without assuming the existence of a generative model (e.g., generative models are often used to alleviate the challenges of exploration \cite{kearns1999finite,sidford2018near,sidford2018variance}), without building a transition model, and without  requiring a near-optimal planning oracle.  Our algorithm is simple and similar to classic Q-learning. To the best of our knowledge, this is the first provably efficient model-free Q-learning algorithm in metric spaces. 

\section{Preliminaries}

We consider episodic finite horizon MDP $(\P, {\cal S}, {\cal A}, \mathbf{r}, H)$ where $\cal S$ and $\cal A$ are the state and action space; $\P$ is the transition kernel such that $\P_h(\cdot | x,a)\in\Delta(\cal S)$ gives the state distribution if action $a$ is taken at state $s$ at step $h$; ${\cal S}$, $\mathbf{r}$ is the reward function such that $r_h(x,a)\in [0,1]$ is the reward of taking action $a$ at state $s$ at time step $h$; $H$ is finite horizon. We define policy $\pi: {\cal S}\to{\cal A}$ as mapping that maps from states to actions. Given a policy $\pi$, we define value function as
\begin{align*}
V^{\pi}_h(x) = \E \left[ \sum_{i=h}^H r_i(x_i,a_i) ~\Big|~ x_h = x, a_i = \pi(x_i) \right],
\end{align*}where the expectation is taken with respect to the randomness of the MDP. We also define state-action value function as
\begin{align*}
Q^{\pi}_h(x,a) = r_h(x,a) + \E_{x'\sim \P_h(\cdot|x,a)}[V_{h+1}^{\pi}(x')],
\end{align*} where for notation simplicity we define $V^{\pi}_{H+1}(x) = 0$ for any $\pi$ and $x\in\cal S$. The optimal $Q$ function $Q^*$ satisfies the Bellman optimality $Q_h^*(x,a) = r_h(x,a) + \E_{x'\sim \P_h(\cdot|x,a)}[\max_{a\in{\cal A}}Q^*_{h+1}(x,a)]$, and the optimal policy $\pi^*$ is induced from $Q^*$ as $\pi^*_h(x) = \arg\max_{a\in{\cal A}}Q^*_h(x,a)$.

Following the assumption in \cite{kakade2003exploration}, we assume that there is a distance metric $D:({\cal S}\times{\cal A})^2 \to \R^+$, such that $d\left((x,a), (x',a')\right)$ measures the distance between two state-action pairs, and $D((x,a), (x',a')) = 0$ if and only if $x=x',a=a'$,  and $D((x,a),(x',a')) = D((x',a'),(x,a))$ (i.e., symmetric). Given $\cal S\times\cal A$ and the metric $D$, we define an $\epsilon$-net of the metric space $({\cal S}\times{\cal A}, D)$ as ${\cal N}(({\cal S\times{\cal A}}),\epsilon, D) \subset {\cal S\times\cal A}$, such that for any $(x,a)\in\cal S\times\cal A$, there exists a $(x',a')\in{\cal N}({\cal S}\times{\cal A}, \epsilon, d)$, such that $D((x,a),(x',a')) \leq \epsilon$. Below we use $\cal{N}_{\epsilon}$ to denote the $\epsilon$-net that has the smallest size, and $|\N_{\epsilon}|$ is defined as the \emph{covering number}.
We define \emph{covering dimension} $d \triangleq \inf_{d>0}\{ |\N_{\epsilon}| \leq \epsilon^{-d}, \forall \epsilon>0 \}$. We refer readers to \cite{clarkson2006building,shalev2014understanding} for details of covering numbers and covering dimensions. 

Below, we show the main assumption in our work---the Lipschitz continuous assumption on the optimal Q function:
\begin{assumption}[Lipschitz Continuous $Q^*$]\label{assum:lipschitz}
We assume that for any $h\in [H]$, $Q^*_h$ is Lipschitz continuous as:
\begin{align*}
    |Q_h^*(x,a) - Q_h^*(x',a')| \leq D\left((x,a), (x',a')\right), \forall (x,a,x',a').
\end{align*}
\end{assumption} 
The above assumption captures the settings where nearby state-action pairs have similar $Q^*$ values.  Previous works on RL in metric spaces often assume Lipschitz continuous in transition kernel and reward function, which in turn actually implies Lipschitz continuous in $Q^*$. 
\begin{proposition}
\label{prop:lipschitz}
If we have Lipschitz continuous transition kernel and reward function, i.e., 
\begin{align*}
    \|\P_h(\cdot|x,a) - \P_h(\cdot|x',a')\|_1 \leq & ~ D((x,a),(x',a')), \\
    |r_h(x,a) - r_h(x',a')| \leq & ~ D((x,a),(x',a')) 
\end{align*} for all $(h,x,a,x',a')$, then we have that $Q^*$ is also Lipschitz continuous:
\begin{align*}
  |Q_h^*(x,a) - Q_h^*(x',a')| \leq  (H-h+1) \cdot D((x,a), (x',a')).
\end{align*}
\end{proposition}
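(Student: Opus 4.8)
The plan is to apply the Bellman optimality equation to split $Q_h^*(x,a)-Q_h^*(x',a')$ into a reward difference and a difference of expected next-step optimal values, bound the former directly by the reward-Lipschitz assumption, and bound the latter by combining the transition-Lipschitz assumption with the crude (but sufficient) uniform bound $\|V_{h+1}^*\|_\infty\le H-h$. Fix $h\in[H]$ and pairs $(x,a),(x',a')$, and write $V_{h+1}^*(y)=\max_{b\in{\cal A}}Q_{h+1}^*(y,b)$. By Bellman optimality,
\begin{align*}
Q_h^*(x,a)-Q_h^*(x',a') = \big(r_h(x,a)-r_h(x',a')\big) + \Big(\E_{y\sim\P_h(\cdot|x,a)}[V_{h+1}^*(y)]-\E_{y\sim\P_h(\cdot|x',a')}[V_{h+1}^*(y)]\Big),
\end{align*}
so by the triangle inequality it suffices to bound the first term by $D((x,a),(x',a'))$ and the second by $(H-h)\cdot D((x,a),(x',a'))$.

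The first bound is exactly the assumed Lipschitz continuity of $r_h$. For the second, I would first record that $0\le V_{h+1}^*(y)\le H-h$ for every $y$: this is immediate from the definition of the value function together with $r_i\in[0,1]$, since from step $h+1$ through $H$ there are $H-h$ rewards (and $V_{H+1}^*\equiv 0$). Then, using the elementary inequality $|\E_p[f]-\E_q[f]|=\big|\sum_y f(y)\,(p(y)-q(y))\big|\le\|f\|_\infty\,\|p-q\|_1$ with $f=V_{h+1}^*$, $p=\P_h(\cdot|x,a)$, $q=\P_h(\cdot|x',a')$, the second term is at most $(H-h)\,\|\P_h(\cdot|x,a)-\P_h(\cdot|x',a')\|_1\le(H-h)\,D((x,a),(x',a'))$ by the transition-Lipschitz assumption. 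Summing the two bounds gives $|Q_h^*(x,a)-Q_h^*(x',a')|\le(H-h+1)\,D((x,a),(x',a'))$, as claimed. One can equivalently phrase this as a backward induction on $h$ from $H+1$ down to $1$, but the only property of the later stages that is used is the uniform bound on $V_{h+1}^*$, so no genuine inductive hypothesis is required.

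There is no real obstacle here; the single step that deserves care is the perturbation bound for the expected value, i.e., converting an $\ell_1$ bound between the two next-state distributions into a bound on the gap of their expectations of $V_{h+1}^*$. The point to get right is that this uses only the \emph{boundedness} of $V_{h+1}^*$ and not any Lipschitz property of it, and that the $\ell_1$ norm is taken over next-state distributions while the metric $D$ lives on state-action pairs; once these are lined up, the constants fall out, with the base case $h=H$ reading $|r_H(x,a)-r_H(x',a')|\le D((x,a),(x',a'))$ as a consistency check.
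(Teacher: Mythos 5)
Your proof is correct and follows essentially the same route as the paper: both apply the Bellman optimality equation, bound the reward gap by the Lipschitz assumption on $r_h$, and bound the gap in expected next-step values via H\"older's inequality together with the crude bound $\|V_{h+1}^*\|_\infty \leq H-h$ (the paper writes this as $\|Q_{h+1}^*\|_\infty \leq H-h$ with $V_{h+1}^*(x'') = Q_{h+1}^*(x'',\pi^*(x''))$). No substantive difference.
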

For completeness, we include the proof of the above proposition in Appendix~\ref{app:lipschitz}. Hence our assumption~\ref{assum:lipschitz} above is weaker than the assumptions used in previous related works.

We measure our algorithm's sample efficiency via \emph{regret}. At the $k$-th episode, the initial state $x_1^k$ is revealed to the learner ($x_1^k$ could be chosen by an adversary), the learner picks a policy $\pi^k$, and execute the policy $\pi^k$ for $H$ steps to reach the end of the episode. The cumulative regret of the learner is defined as
\begin{align*}
    \textsc{Regret} = \sum_{k=1}^K V^*_1(x_1^k) - V_1^{\pi_k}(x_1^k).
\end{align*}

We will use Azuma-Hoeffding inequality to construct confidence bound of $Q^*$. For completeness, we state Azuma-Hoeffding inequality below. 
\begin{lemma}[Azuma-Hoeffding inequality]
Suppose $\{ X_k : k = 0,1,2,3, \cdots \}$ is a martingale and $| X_k - X_{k-1} | < c_k $, almost surely. Then for all positive integers $N$ and all positive reals $t$,
\begin{align*}
    \Pr[ | X_N - X_0 | \geq t ] \leq 2 \exp \left( \frac{ -t^2 }{ 2 \sum_{k=1}^N c_k^2 } \right).
\end{align*}
\end{lemma}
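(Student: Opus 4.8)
The plan is to prove the inequality by the exponential-moment (Chernoff) method combined with iterated conditioning on the natural filtration of the martingale. First I would reduce to a one-sided statement: it suffices to establish $\Pr[X_N - X_0 \geq t] \leq \exp(-t^2/(2\sum_{k=1}^N c_k^2))$, since applying the very same bound to the martingale $\{-X_k\}$ controls $\Pr[X_N - X_0 \leq -t]$, and a union bound over the two tails produces the leading factor of $2$. Writing $Y_k = X_k - X_{k-1}$ for the increments and letting $\mathcal{F}_{k-1}$ denote the $\sigma$-algebra generated by $X_0,\dots,X_{k-1}$, the hypotheses give $|Y_k| < c_k$ almost surely and $\E[Y_k \mid \mathcal{F}_{k-1}] = 0$ by the martingale property.

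The core of the argument is to control the moment generating function of $X_N - X_0 = \sum_{k=1}^N Y_k$. For any $\lambda > 0$, Markov's inequality applied to $e^{\lambda(X_N-X_0)}$ gives
\[
\Pr[X_N - X_0 \geq t] \leq e^{-\lambda t}\, \E\!\left[e^{\lambda(X_N - X_0)}\right].
\]
I would then peel off the last increment using the tower property: since $\sum_{k=1}^{N-1} Y_k$ is $\mathcal{F}_{N-1}$-measurable, it factors out and we obtain $\E[e^{\lambda \sum_{k=1}^N Y_k}] = \E[e^{\lambda \sum_{k=1}^{N-1} Y_k}\, \E[e^{\lambda Y_N}\mid \mathcal{F}_{N-1}]]$. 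Iterating this peeling over $k = N, N-1, \dots, 1$ then yields $\E[e^{\lambda(X_N - X_0)}] \leq \exp(\tfrac{\lambda^2}{2}\sum_{k=1}^N c_k^2)$, provided each conditional factor is suitably bounded.

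The key step, and the main technical obstacle, is the per-increment estimate $\E[e^{\lambda Y_k}\mid \mathcal{F}_{k-1}] \leq \exp(\lambda^2 c_k^2/2)$, which is Hoeffding's lemma applied conditionally. Since $Y_k$ is conditionally mean-zero and supported in $[-c_k, c_k]$, I would bound its conditional cumulant generating function $\psi(\lambda) = \log \E[e^{\lambda Y_k}\mid \mathcal{F}_{k-1}]$ by Taylor expanding about $\lambda = 0$: one verifies $\psi(0) = 0$, $\psi'(0) = \E[Y_k \mid \mathcal{F}_{k-1}] = 0$, and $\psi''(\lambda) \leq c_k^2$ for every $\lambda$. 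The last bound holds because $\psi''(\lambda)$ equals the variance of $Y_k$ under the exponentially tilted conditional law, and any random variable confined to an interval of length $2c_k$ has variance at most $c_k^2$. Integrating $\psi'' \le c_k^2$ twice from $0$ to $\lambda$ gives $\psi(\lambda) \le \lambda^2 c_k^2/2$. The care here lies in justifying the interchange of differentiation and conditional expectation and in the variance-of-a-bounded-variable estimate, but both are elementary and convexity-driven.

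Finally I would assemble the pieces. Combining the Chernoff bound with the iterated moment-generating-function estimate gives
\[
\Pr[X_N - X_0 \geq t] \leq \exp\!\left(-\lambda t + \tfrac{\lambda^2}{2}\sum_{k=1}^N c_k^2\right),
\]
and I would optimize the free parameter by choosing $\lambda = t/\sum_{k=1}^N c_k^2$, which minimizes the exponent and makes it equal to $-t^2/(2\sum_{k=1}^N c_k^2)$. Substituting back, together with the symmetric tail bound and the union bound from the first paragraph, yields the claimed two-sided inequality $\Pr[|X_N - X_0| \geq t] \leq 2\exp(-t^2/(2\sum_{k=1}^N c_k^2))$.
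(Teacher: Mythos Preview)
Your proof is correct and follows the standard Chernoff--Hoeffding route; there is nothing to object to mathematically. However, the paper does not actually prove this lemma: it is stated in the Preliminaries section as a classical tool (``For completeness, we state Azuma--Hoeffding inequality below'') and is simply invoked later in the analysis without any argument. So there is no ``paper's own proof'' to compare against --- you have supplied a full proof where the authors deliberately omitted one.
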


{\bf Notations.}
For real numbers $a,b, \epsilon$, we use $a = b \pm \epsilon$ to denote that $ a \in [b-\epsilon,b+\epsilon]$. For an integer $K$, we use $[K]$ to denote the set $\{1,2,\dots, K\}$. We use ${\bf 1}$ to denote the indicator function such that ${\bf 1}[f] = 1$ if $f$ holds and ${\bf 1}[f] = 0$ otherwise. For any function $f$, we define $\wt{O}(f)$ to be $f\cdot \log^{O(1)}(f)$.

\section{Algorithm}

Our algorithm takes an $\epsilon$-net $\N_{\epsilon}$ as input. For any $(x,a)\in{\cal S}\times{\cal A}$, we define $\phi: {\cal S}\times{\cal A}\to \N_{\epsilon}$ as the mapping that maps $(x,a)$ to the closest point in the net, i.e., \begin{align*}
\phi(x,a) \triangleq \arg\min_{(x',a')\in \N_{\epsilon}}D((x,a),(x',a')).
\end{align*} Since $\N_{\epsilon}$ is an $\epsilon$-net, then we have $D((x,a),\phi(x,a))\leq \epsilon$, which in turn implies that $|Q^*_h(x,a) - Q_h^*(\phi(x,a))| \leq \epsilon$ via assumption~\ref{assum:lipschitz}. This intuitively means that when $\epsilon$ is small, as long as we can accurately estimate the optimal value $Q^*$ of the points in $\N_{\epsilon}$, we will be able to achieve a uniformly accurate estimation of $Q^*$ at any state-action pair.



\begin{algorithm}[t!]\caption{}\label{alg:main}
\begin{algorithmic}[1]
\Procedure{\textsc{Net-based Q-learning} }{${\cal S}, {\cal A},\N_{\epsilon}, H, K $} \Comment{Theorem~\ref{thm:main}}
\For{$h = 1 \to H$}
    \For{$(x,a) \in {\cal N}_{\epsilon}$}
        \State $Q_h(x,a) \leftarrow H$ 
        \State $n_h(x,a) \leftarrow 0$
    \EndFor
\EndFor

\For{episode $k = 1 \to K$}
	\State {\bf Receive} $x_1$ 
	\For{step $h = 1 \to H$}
		\State {\bf Take} action $a_h \leftarrow \arg\max_{a'} Q_h(\phi(x_h, a'))$ \label{line:take_action}
		\State {\bf Receive} state $x_{h+1}$  
		\State $n_h(\phi(x_h, a_h)) \leftarrow n_h(\phi(x_h, a_h)) + 1$ 
		\State $t \leftarrow n_h(\phi(x_h, a_h))$
		\State $b_t \leftarrow c \sqrt{H^3 \gamma /t }$ \label{line:bonus}
		\State $Q_h(\phi(x_h, a_h)) \leftarrow (1-\alpha_t) \cdot Q_h(\phi(x_h,a_h)) + \alpha_t \cdot ( r_h(x_h,a_h) + V_{h+1}(x_{h+1}) + b_t ) $ \label{line:update_Q}
		\State $V_h(x_h) \leftarrow \min \{ H , \max_{a' \in {\cal A} } Q_h(\phi(x_h , a') ) \}$ \label{line:update_V}
	\EndFor
\EndFor
\EndProcedure
\end{algorithmic}
\label{alg:NBQL}
\end{algorithm}

Algorithm~\ref{alg:NBQL}---Net-based Q-learning (\textsc{NbQl}) implements the above intuition. Alg.~\ref{alg:NBQL} maintains two tables, of which the size is $|\N_{\epsilon}|\times |\N_{\epsilon}|$---hence with space quadratic with respect to the covering number. For any pair of state-action $(x',a')\in {\cal N}_{\epsilon}$, we maintain an estimation $Q_h$ of $Q^*_h(x',a')$, and also maintain a counter $n_h(x',a')$ which increments every time a state-action pair mapped to $(x',a')$ via $\phi$ is visited at time step $h$. Formally, we increase the counter corresponding to $(x',a')\in\N_{\epsilon}$  by one whenever the algorithm encounters a state-action pair $(x,a)$ such that $\phi(x,a) = (x',a')$. The counter will be used to construct bonus (Line~\ref{line:bonus}) for encouraging exploring less frequently visited regions (i.e., regions covered by points in $\N_{\epsilon}$ that have low counts). For large or continuous state-action space, performing such approximate counting provides generalization to unseen states. Such idea was empirically studied in \cite{tang2017exploration} where the proposed algorithm maintains counts over a subset of state-action pairs defined by a mapping $\phi$ (in \cite{tang2017exploration}, $\phi$ is some Hashing function such as SimHash \cite{charikar2002similarity}). Our work provides a theoretical justification for such approximate count-based exploration strategy in continuous state-action space.


To estimate $Q^*_h(x',a')$ for any $(x',a')\in\N_{\epsilon}$, we maintain a $Q$-table $Q_h$ whose entries corresponding to points in $\N_{\epsilon}$, and we use a Q-learning like update as shown in Line~\ref{line:update_Q}. Given $Q_h$, the policy induced by $Q_h$ is $\arg\max_{a\in{\cal A}}Q_h(x,a)$ at any $x\in\cal S$ (Line~\ref{line:take_action}). Given $Q_h$ defined over $\N_{\epsilon}$, we define $V_h(x)$ for any $x\in{\cal S}$ as $V_h(x) = \min\{H, \max_{a\in{\cal A}}Q_h(\phi(x,a))\}$ (Line~\ref{line:update_V}), where we take min since the optimal value $V_h^*(x)$ is bounded by $H$ always. Note that $V_h$ is defined over the entire state space $\cal S$. Though we explicitly write down the form of $V_h$ in Line~\ref{line:update_V},  we emphasize here that we never need to explicitly construct or maintain $V_h$ in Alg.~\ref{alg:NBQL}. Whenever we need to query the value $V_h(x)$ (i.e., in Line~\ref{line:update_Q}), we can use the expression in Line~\ref{line:update_V}.

Regarding computation, first note that Alg.~\ref{alg:NBQL} does not require access to a black-box planning oracle. But we do assume an oracle to compute $\max_{a} Q_h(\phi(x,a))$ and query $\arg\max_a Q_h(\phi(x,a))$ (break tie arbitrarily), which are used in Line~\ref{line:update_V} and Line~\ref{line:take_action}, respectively. Note that the computation time of $\max_{a} Q_h(\phi(x,a))$ for any pair $(x,a)$ is $O(|\N_{\epsilon}|)$---the covering number, since at most we just need to scan through all $(x',a')\in\N_{\epsilon}$ with $x' = x$. Building the optimal $\epsilon$-net in practice is intractable, but one can use the greedy approach to build an $\epsilon$-net: choose $(x^1, a^1)$ arbitrarily from $\cal S \times\cal A$; choose $(x^2, a^2)$ which is at least $\epsilon$ away from $(x^1,a^1)$; choose $(x^3, a^3)$ that is at least $\epsilon$ away from $(x^i,a^i)$ for $i\in [2]$, and so on. Under the assumption $\cal S\times\cal A$ is compact, this procedure returns an $\epsilon$-net whose size is no larger than $|\N_{\epsilon/2}|$---the size of the optimal $\epsilon/2$-net.




\section{Analysis}

In this section, we provide the analysis for $\textsc{NbQl}$. Specifically, we show that the regret bound of $\textsc{NbQl}$ scales as follows. 
\begin{theorem}[main result]\label{thm:main}
There exists an absolute constant $c > 0$ such that, for any $p \in (0,1)$, $\epsilon \in (0,1)$,  with probability at least $1-p$, \textsc{NbQl} (Alg.~\ref{alg:NBQL}) achieves regret at most $O( \sqrt{ H^4 N T \gamma } + \epsilon T )$ where $\gamma = \log ( N T / p )$ and $N$ being the size of the $\epsilon$-net. When $N \leq \epsilon^{-d}$ with $d$ being the covering dimension of $\cal S\times\cal A$, further optimizing $\epsilon$ (set $\epsilon = T^{-1/(d+2)}$) gives the regret:
\begin{align*}
\tilde{O}(H^2 T^{\frac{1+d}{2+d}}).
\end{align*}
\end{theorem}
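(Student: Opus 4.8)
The plan is to transplant the optimistic Q-learning analysis of \cite{jabj18} onto the finite surrogate space $\N_\epsilon$, treating $N=|\N_\epsilon|$ as playing the role of ``$|\calS||\calA|$'' and threading the $O(\epsilon)$ discretization error coming from the map $\phi$ through every estimate. The first step is bookkeeping on the learning rates $\alpha_t=\tfrac{H+1}{H+t}$ implicit in Line~\ref{line:update_Q}: defining $\alpha_t^0=\prod_{j=1}^t(1-\alpha_j)$ and $\alpha_t^i=\alpha_i\prod_{j=i+1}^t(1-\alpha_j)$, I would record the standard identities $\sum_{i=1}^t\alpha_t^i=1$ and $\alpha_t^0=0$ for $t\ge 1$ (with $\alpha_0^0=1$), $\max_i\alpha_t^i\le 2H/t$, $\sum_i(\alpha_t^i)^2\le 2H/t$, $\tfrac1{\sqrt t}\le\sum_{i=1}^t\tfrac{\alpha_t^i}{\sqrt i}\le\tfrac2{\sqrt t}$, and $\sum_{t=i}^\infty\alpha_t^i=1+\tfrac1H$; the last one is what eventually turns the $H$-fold iteration into a factor $e=O(1)$ rather than an exponential blow-up.

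For a net point $z\in\N_\epsilon$ whose counter at the start of episode $k$ is $t=n_h^k(z)$, having been incremented during episodes $k_1<\cdots<k_t<k$, unrolling Line~\ref{line:update_Q} gives $Q_h^k(z)=\alpha_t^0 H+\sum_{i=1}^t\alpha_t^i\big(r_h(x_h^{k_i},a_h^{k_i})+V_{h+1}^{k_i}(x_{h+1}^{k_i})+b_i\big)$. Since Assumption~\ref{assum:lipschitz} controls only $Q^*$ (not $r_h$ or $\P_h$ individually), I would subtract $Q_h^*(z)$ using the Bellman equation \emph{as a whole}, obtaining
\begin{align*}
(Q_h^k-Q_h^*)(z)=\alpha_t^0\big(H-Q_h^*(z)\big)+\sum_{i=1}^t\alpha_t^i\Big[(V_{h+1}^{k_i}-V_{h+1}^*)(x_{h+1}^{k_i})+\zeta_i+\big(Q_h^*(x_h^{k_i},a_h^{k_i})-Q_h^*(z)\big)+b_i\Big],
\end{align*}
where $\zeta_i:=V_{h+1}^*(x_{h+1}^{k_i})-(\P_h V_{h+1}^*)(x_h^{k_i},a_h^{k_i})$ is a martingale difference bounded by $H$ --- crucially a difference of the \emph{fixed} function $V_{h+1}^*$, so its conditional mean vanishes --- and $|Q_h^*(x_h^{k_i},a_h^{k_i})-Q_h^*(z)|\le\epsilon$ because $\phi(x_h^{k_i},a_h^{k_i})=z$. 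Applying Azuma--Hoeffding to $\sum_i\alpha_t^i\zeta_i$ with $\sum_i(\alpha_t^i)^2\le 2H/t$, and a union bound over $z$, $h$, and the at most $K$ counter values, shows $|\sum_i\alpha_t^i\zeta_i|\le c'\sqrt{H^3\gamma/t}$ uniformly with probability $\ge 1-p$; picking $c$ in Line~\ref{line:bonus} large makes $\sum_i\alpha_t^i b_i$ dominate this, so on the good event $\sum_i\alpha_t^i(\zeta_i+b_i)\in[0,\beta_t]$ with $\beta_t:=O(\sqrt{H^3\gamma/t})$. A backward induction on $h=H+1,\dots,1$ (base case $V_{H+1}\equiv V_{H+1}^*\equiv 0$) then gives approximate optimism: $Q_h^k(z)\ge Q_h^*(z)-s_h$ and hence $V_h^k(x)\ge V_h^*(x)-s_h$ for all $k,z,x$, with $s_h=O((H-h)\epsilon)$, using $V_h^*(x)\le\epsilon+\max_a Q_h^*(\phi(x,a))$, the inductive bound on $V_{h+1}$, and $\min\{H,A+c\}\le\min\{H,A\}+c$ for $c\ge 0$.

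For the regret, write $z_h^k=\phi(x_h^k,a_h^k)$ with $a_h^k$ the greedy action of Line~\ref{line:take_action}, and set $\delta_h^k:=(V_h^k-V_h^{\pi_k})(x_h^k)$, $\varphi_h^k:=(V_h^k-V_h^*)(x_h^k)$; note $\varphi_h^k\le\delta_h^k$ and $\varphi_h^k\ge -s_h$. From $V_h^k(x_h^k)\le Q_h^k(z_h^k)$, $V_h^{\pi_k}(x_h^k)=Q_h^{\pi_k}(x_h^k,a_h^k)$, $Q_h^*-Q_h^{\pi_k}=\P_h(V_{h+1}^*-V_{h+1}^{\pi_k})$, the exact identity $(V_{h+1}^*-V_{h+1}^{\pi_k})(x_{h+1}^k)=\delta_{h+1}^k-\varphi_{h+1}^k$, and $|Q_h^*(z_h^k)-Q_h^*(x_h^k,a_h^k)|\le\epsilon$, one gets $\delta_h^k\le (Q_h^k-Q_h^*)(z_h^k)+\epsilon+(\delta_{h+1}^k-\varphi_{h+1}^k)+\eta_h^k$ with $\eta_h^k$ a bounded martingale difference. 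Summing over $k$ and expanding $\sum_k(Q_h^k-Q_h^*)(z_h^k)$ via the formula above: the $\alpha_t^0 H$ terms contribute $\le NH$ (each net point is first visited at step $h$ at most once over all episodes); the propagated term $\sum_k\sum_i\alpha_{t_k}^i(V_{h+1}^{k_i}-V_{h+1}^*)(x_{h+1}^{k_i})$ reorganizes --- each past episode's $\varphi_{h+1}$ receiving total weight $\sum_{t\ge i}\alpha_t^i=1+\tfrac1H$ --- to $\le(1+\tfrac1H)\sum_k\varphi_{h+1}^k$; the $\beta_{t_k}$ bonuses sum to $O(\sqrt{H^3NK\gamma})$ since $\sum_k 1/\sqrt{n_h^k(z_h^k)}\le\sum_{z}2\sqrt{n_h^{K+1}(z)}\le 2\sqrt{NK}$ by Cauchy--Schwarz; and the $\epsilon$-Lipschitz slacks together with $\sum_k\eta_h^k$ are $O(\epsilon K)+O(H\sqrt{K\gamma})$. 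The key cancellation is that the $-\sum_k\varphi_{h+1}^k$ from the transition term kills all but $\tfrac1H\sum_k\varphi_{h+1}^k\le\tfrac1H\sum_k\delta_{h+1}^k$, leaving the benign recursion
\begin{align*}
\sum_k\delta_h^k\le\Big(1+\tfrac1H\Big)\sum_k\delta_{h+1}^k+NH+O\big(\sqrt{H^3NK\gamma}\big)+O\big(H\sqrt{K\gamma}\big)+O(\epsilon K).
\end{align*}

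Iterating this from $h=1$ with $\delta_{H+1}^k\equiv 0$ costs $\sum_{h}(1+\tfrac1H)^{h-1}\le eH$, so $\sum_k\delta_1^k=O\big(NH^2+\sqrt{H^5NK\gamma}+\epsilon HK\big)$; combining with $V_1^*(x_1^k)\le V_1^k(x_1^k)+s_1=V_1^k(x_1^k)+O(H\epsilon)$ and $T=KH$ gives $\textsc{Regret}=\sum_k(V_1^*-V_1^{\pi_k})(x_1^k)\le\sum_k\delta_1^k+O(\epsilon T)=O\big(\sqrt{H^4NT\gamma}+\epsilon T\big)$, absorbing the lower-order $NH^2$ term into the first. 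Finally, under $N\le\epsilon^{-d}$ the bound reads $O\big(H^2\sqrt{\epsilon^{-d}T\gamma}+\epsilon T\big)$, and balancing the two summands --- i.e. taking $\epsilon=T^{-1/(d+2)}$, so that $\epsilon T=T^{(d+1)/(d+2)}$ matches $H^2\sqrt{\epsilon^{-d}T\gamma}$ up to logarithms --- yields $\widetilde O\big(H^2 T^{(1+d)/(2+d)}\big)$. The main obstacle is the Step~3 bookkeeping: one must (i) run the whole argument against $Q^*$, since Assumption~\ref{assum:lipschitz} gives no handle on reward differences between two pairs sharing a net representative; (ii) secure the $(1+\tfrac1H)$ per-level factor via the $\delta_{h+1}^k-\varphi_{h+1}^k$ identity and $\varphi\le\delta$, without which the horizon dependence is exponential; and (iii) check that the three sources of $\epsilon$-error --- the optimism slack $s_h$, the comparison $Q_h^*(z_h^k)$ versus $Q_h^*(x_h^k,a_h^k)$, and the terms inside the unrolled sum --- aggregate to only an additive $O(\epsilon T)$. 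The martingale concentration, despite the data-dependent visitation order of each net point, is routine once phrased as a union bound over the at most $K$ values each counter can take and one exploits that $V_{h+1}^*$ is non-random.
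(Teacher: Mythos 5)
Your proposal is correct and follows essentially the same route as the paper: unroll the update over the net point's visit history, compare against $Q^*$ via the Bellman equation so that the discretization error enters only as a $\pm\epsilon$ per term, apply Azuma--Hoeffding with a union bound over $\N_\epsilon\times[H]\times[K]$ to justify the bonus, establish approximate optimism with slack $O((H-h)\epsilon)$ by backward induction, and close the $\delta_h^k$/$\varphi_h^k$ recursion using $\sum_{t\ge i}\alpha_t^i=1+\tfrac1H$, the $NH$ bound on the $\alpha_t^0 H$ terms, and the pigeonhole/Cauchy--Schwarz bound on the bonuses before balancing $\epsilon=T^{-1/(d+2)}$. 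The only cosmetic difference is at the very end: "absorbing'' the $NH^2$ term into $\sqrt{H^4NT\gamma}$ needs the observation that regret is trivially at most $T$ (the paper's two-case argument), since $NH^2\le\sqrt{H^4NT\gamma}$ only when $N\le T\gamma$.
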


Note that here $d$ is the covering dimension of $\cal S \times \cal A$. For special case where $d = 1$, we can see that the dependency of $T$ becomes $T^{2/3}$, which matches to the regret bound of model-based approaches \cite{lakshmanan2015improved} under the assumptions of the transition kernel and reward function being Lipschitz continuous. However, comparing to the model-based algorithm from \cite{lakshmanan2015improved}, our algorithm does not require a planning oracle.\footnote{Note that efficient model-based algorithms in \cite{ortner2012online,lakshmanan2015improved} actually need an optimistic planning oracle to choose the most optimistic model from a set of models, which is even a stronger assumption than the assumption of having access to a planning oracle.}   Also the dependency on $T$ cannot be improved in the worst case due to the fact that the lower bound for Lipschitz Multi-armed Bandit scales $\Omega(T^{(d+1)/(d+2)})$ (e.g., Theorem 4.12 in \cite{slivkins2019introduction}).


To prove the above theorem, we provide several useful lemmas below. Throughout this section, we will use $x_h^{k},a_h^k$ to represent the state and action generated at time step $h$ at the $k$-th episode; $Q_h^k$ being the Q-table over the net at time step $h$ at the beginning of the $k$-th episode, and $V_h^k$ is defined using $Q_h^k$ via Line~\ref{line:update_V}; policy $\pi_k$ at the $k$-th episode is induced from $Q_h^k$ as $\pi_k(x) = \arg\max_{a\in{\cal A}}Q_h^k(\phi(x,a))$ at time step $h$. The learning rate $\alpha_t$ is set to be $\alpha_t = (H+1)/(H+t)$ which is the same as the learning rate used \cite{jabj18}. We denote $\alpha_t^0 = \prod_{j=1}^t (1-\alpha_j)$ and $\alpha_t^i = a_i\prod_{j=i+1}^t (1-\alpha_j)$. For any $(x,a,h,k)$, $n_{h,k}(\phi(x,a))$ records the total number of times that $\phi(x,a)\in\N_{\epsilon}$ has been visited at the beginning of the $k$-th episode. 

First, we provide a generalization of Equation 4.3 in \cite{jabj18} below. 

\begin{lemma}\label{lemma:q_formula} 
At any $(x,a, h, k)\in {\cal S}\times {\cal A}\times [H]\times [K]$, let $t = n_{h,k} = n_{h,k}(\phi(x,a))$, and suppose $\phi(x,a)$ was previously encountered at step $h$ of episodes $k_1, k_2, ..., k_{t} < k$, i.e., $\phi(x_h^{k_i} , a_h^{k_i}) = \phi(x,a)$, $\forall i \in [n_{h,k}]$. By the update rule of $Q$, we have:
\begin{align*}
    Q_h^k(\phi(x,a)) = \alpha_{ t }^0 \cdot H + \sum_{i=1}^{ t } \alpha_t^i \cdot \left( r_h(x_h^{k_i},a_h^{k_i}) + V_{h+1}^{k_i}(x_{h+1}^{k_i}) +b_i \right).
\end{align*}
\end{lemma}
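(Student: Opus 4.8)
The plan is to prove Lemma~\ref{lemma:q_formula} by induction on $t = n_{h,k}(\phi(x,a))$, unrolling the update rule in Line~\ref{line:update_Q} of Algorithm~\ref{alg:NBQL}. Fix $(x,a,h,k)$ and write $z = \phi(x,a) \in \N_\epsilon$ for brevity. The key observation is that the only episodes that change the value of $Q_h(z)$ are precisely $k_1 < k_2 < \dots < k_t < k$, since $Q_h(z)$ is updated at step $h$ of episode $k'$ if and only if $\phi(x_h^{k'}, a_h^{k'}) = z$. At all other episodes between updates, $Q_h(z)$ stays constant, so it suffices to track the value right after the $i$-th relevant update.

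First I would set up notation: let $Q^{(i)}$ denote the value of $Q_h(z)$ immediately after the update at episode $k_i$, with $Q^{(0)} = H$ the initialization (Line~4 of the algorithm). When the $i$-th update happens, the counter $n_h(z)$ has just been incremented to $i$, so $t$ in Line~\ref{line:update_Q} equals $i$, and the learning rate used is $\alpha_i$. Crucially, the value $V_{h+1}(x_{h+1})$ appearing in that update is $V_{h+1}^{k_i}(x_{h+1}^{k_i})$ — the $V$-table at the start of episode $k_i$ evaluated at the state actually observed — and the bonus is $b_i = c\sqrt{H^3\gamma/i}$. Hence the recursion is
\begin{align*}
Q^{(i)} = (1-\alpha_i)\, Q^{(i-1)} + \alpha_i\left( r_h(x_h^{k_i},a_h^{k_i}) + V_{h+1}^{k_i}(x_{h+1}^{k_i}) + b_i \right).
\end{align*}
Then $Q_h^k(z) = Q^{(t)}$ since no further updates occur before episode $k$.

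Next I would unroll this linear recursion. Writing $g_i \triangleq r_h(x_h^{k_i},a_h^{k_i}) + V_{h+1}^{k_i}(x_{h+1}^{k_i}) + b_i$, an easy induction gives $Q^{(t)} = \left(\prod_{j=1}^t (1-\alpha_j)\right) H + \sum_{i=1}^t \alpha_i \left(\prod_{j=i+1}^t (1-\alpha_j)\right) g_i$. By the definitions stated in the paper, $\alpha_t^0 = \prod_{j=1}^t(1-\alpha_j)$ and $\alpha_t^i = \alpha_i\prod_{j=i+1}^t(1-\alpha_j)$ (with the empty product equal to $1$ when $i = t$), so this is exactly the claimed formula. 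The base case $t=0$ reads $Q_h^k(z) = \alpha_0^0 H = H$, which matches the initialization, and if $t \geq 1$ the inductive step is a one-line substitution of the recursion into the formula for $Q^{(t-1)}$ together with the identity $\alpha_{t}^i = (1-\alpha_t)\,\alpha_{t-1}^i$ for $i < t$ and $\alpha_t^t = \alpha_t$.

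The only real subtlety — not so much an obstacle as a point that needs care — is bookkeeping: making sure that the episodes $k_1,\dots,k_t$ enumerated for $z = \phi(x,a)$ are exactly those in which Line~\ref{line:update_Q} fires for this net point, that the counter value $t$ inside the update at episode $k_i$ is $i$ (so the learning rate and bonus indices line up), and that $V$ is frozen at its start-of-episode-$k_i$ value when used in that update. All of these follow directly from reading the pseudocode, since within a single episode the $V$-table used in Line~\ref{line:update_Q} at step $h$ is built from the current $Q$-table, and the argument is standard (it is the metric-space analogue of Equation~4.3 in \cite{jabj18}). Once the indexing is pinned down, the proof is a routine induction, so I would present it compactly.
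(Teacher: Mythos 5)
Your proposal is correct and follows essentially the same route as the paper's proof: both unroll the linear recursion given by the update rule in Line~\ref{line:update_Q} across the episodes $k_1,\dots,k_t$ where $\phi(x,a)$ is hit, and identify the resulting coefficients with $\alpha_t^0$ and $\alpha_t^i$ (your forward induction versus the paper's backward substitution is only a cosmetic difference). Your bookkeeping remarks — that $Q_h(\phi(x,a))$ is frozen between the $k_i$'s, that the counter equals $i$ at the $i$-th update, and that the $V_{h+1}$ used is the start-of-episode-$k_i$ value — are exactly the points the paper's proof relies on implicitly.
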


The above lemma shows that the $Q_h^k(\phi(x,a))$ is maintained by aggregating the information of the state-action pairs encountered so far whose nearest-neighbor is $\phi(x,a)$ (i.e., $\phi(x_h^{k_i},a_h^{k_i}) = \phi(x,a)$ for all $i \in [n_h^k]$). We defer the proof of the above lemma to Appendix~\ref{app:q_formula}.


Throughout the learning process, we hope that our estimation $Q_h^k(\phi(x,a))$ will get closer to the optimal value $Q_h^*(\phi(x,a))$ for any $(x,a,h)$, as $k$ increases. The following lemma measures the difference between $Q_h^k$ and $Q_h^*$ at the points in the net $\N_{\epsilon}$.

\begin{lemma}
For any $(x,a,h) \in {\cal S} \times {\cal A} \times [H]$ and episode $k \in [K]$, let $t = n_{h,k} (\phi(x,a))$ and suppose $\phi(x,a)$ was previously encountered at step $h$ of episodes $k_1, \cdots, k_t < k$. Then
\begin{align*}
( Q_h^k - Q_h^* ) (\phi(x,a)) &  = \alpha_t^0 \cdot ( H - Q_h^*(\phi(x,a)) ) \\
& ~ + \sum_{i=1}^t \alpha_t^i \cdot \left( ( V_{h+1}^{k_i} - V_{h+1}^{*} ) ( x^{k_i}_{h+1} ) + [ ( \wh{\P}_h^{k_i} - \P_h ) V_{h+1}^* ] (x_h^{k_i},a_h^{k_i}) + b_i \pm \epsilon \right).
\end{align*}
\label{lemma:Q_progress}
\end{lemma}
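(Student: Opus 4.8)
The plan is to start from the closed form for $Q_h^k(\phi(x,a))$ given by Lemma~\ref{lemma:q_formula} and subtract $Q_h^*(\phi(x,a))$, distributing the latter across the weights $\{\alpha_t^i\}_{i=0}^t$. First I would record the elementary identity $\alpha_t^0 + \sum_{i=1}^t \alpha_t^i = 1$, which follows by induction on $t$ from the definitions $\alpha_t^0 = \prod_{j=1}^t(1-\alpha_j)$ and $\alpha_t^i = \alpha_i \prod_{j=i+1}^t(1-\alpha_j)$. Hence $Q_h^*(\phi(x,a)) = \alpha_t^0 Q_h^*(\phi(x,a)) + \sum_{i=1}^t \alpha_t^i Q_h^*(\phi(x,a))$, and combining with Lemma~\ref{lemma:q_formula} we get $(Q_h^k - Q_h^*)(\phi(x,a)) = \alpha_t^0\big(H - Q_h^*(\phi(x,a))\big) + \sum_{i=1}^t \alpha_t^i\big( r_h(x_h^{k_i},a_h^{k_i}) + V_{h+1}^{k_i}(x_{h+1}^{k_i}) + b_i - Q_h^*(\phi(x,a)) \big)$. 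It therefore suffices to rewrite each summand inside the $\sum_{i=1}^t$.

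Next, fix $i$. By hypothesis $\phi(x_h^{k_i},a_h^{k_i}) = \phi(x,a)$, and since $\phi$ maps each point to its nearest neighbour in the $\epsilon$-net, $D\big((x_h^{k_i},a_h^{k_i}), \phi(x,a)\big) = D\big((x_h^{k_i},a_h^{k_i}), \phi(x_h^{k_i},a_h^{k_i})\big) \le \epsilon$; Assumption~\ref{assum:lipschitz} then yields $Q_h^*(\phi(x,a)) = Q_h^*(x_h^{k_i},a_h^{k_i}) \pm \epsilon$. This is the only place the $\pm\epsilon$ enters. Now apply the Bellman optimality equation to replace $Q_h^*(x_h^{k_i},a_h^{k_i})$ by $r_h(x_h^{k_i},a_h^{k_i}) + [\P_h V_{h+1}^*](x_h^{k_i},a_h^{k_i})$; the reward terms cancel, leaving $V_{h+1}^{k_i}(x_{h+1}^{k_i}) - [\P_h V_{h+1}^*](x_h^{k_i},a_h^{k_i}) + b_i \pm \epsilon$ for the $i$-th summand. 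Finally, introducing the one-sample empirical operator $[\wh{\P}_h^{k_i} V](x_h^{k_i},a_h^{k_i}) \triangleq V(x_{h+1}^{k_i})$ and adding and subtracting $[\wh{\P}_h^{k_i} V_{h+1}^*](x_h^{k_i},a_h^{k_i}) = V_{h+1}^*(x_{h+1}^{k_i})$ splits this as $(V_{h+1}^{k_i} - V_{h+1}^*)(x_{h+1}^{k_i}) + [(\wh{\P}_h^{k_i} - \P_h) V_{h+1}^*](x_h^{k_i},a_h^{k_i}) + b_i \pm \epsilon$, which is precisely the claimed per-term expression. Summing over $i$ with weights $\alpha_t^i$ and reinstating the $\alpha_t^0$ term gives the stated identity.

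I do not expect a genuine obstacle: the statement is a bookkeeping generalization of Eq.~(4.3)/(4.4) of \cite{jabj18} to the net setting. The one point requiring care is that our hypothesis provides Lipschitz continuity only of $Q^*$ (and not of $r$ or $\P$ individually), so the discretization error must be absorbed at the level of $Q^*$ via the $\epsilon$-net \emph{before} invoking Bellman optimality; after that, the manipulation is the same telescoping/add-and-subtract argument as in the tabular case, and the convexity identity $\sum_{i=0}^t \alpha_t^i = 1$ is what makes the replacement of $Q_h^*(\phi(x,a))$ legitimate.
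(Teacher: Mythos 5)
Your proposal is correct and follows essentially the same route as the paper's proof: decompose $Q_h^*(\phi(x,a))$ via $\alpha_t^0+\sum_{i=1}^t\alpha_t^i=1$, use $\phi(x_h^{k_i},a_h^{k_i})=\phi(x,a)$ together with Assumption~\ref{assum:lipschitz} to absorb the discretization error as $\pm\epsilon$ at the level of $Q^*$, apply the Bellman optimality equation, introduce the one-sample operator $\wh{\P}_h^{k_i}$, and subtract the closed form of Lemma~\ref{lemma:q_formula}. The only difference is cosmetic ordering (you subtract first and then rewrite each summand, while the paper rewrites $Q^*$ first and then subtracts), so no further comment is needed.
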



The next step is to leverage the above lemma and show that with high probability, $Q_h^k(\phi(x,a))$ is approximately an optimistic estimation of $Q_h^*(\phi(x,a))$. To do so, we first observe that the sequence $\{[(\hat{\P}_h^{k_i} - \P_h)V_{h+1}^*](x_h^{k_i},a_h^{k_i})\}_{i=1}^t$ is a Martingale difference sequence, which allows us to use Azuma-Hoeffding inequality to bound the absolute value of the sum of the sequence with high probability. So as long as we set the bonus $\sum_i b_i$ to be large enough, we can guarantee approximate optimism. The proof will also use induction starting from showing optimism at time step $H$, and all the way to $h = 1$. We formally state the upper bound and the lower bound of $(Q_h^k - Q_h^*)(\phi(x,a))$ in the following lemma and defer its proof to Appendix~\ref{app:upper_lower}.

\begin{lemma}\label{lem:bound_on_Q_k_minus_Q_*}
There exists an absolute constant $c > 0$ such that, for any $p \in (0,1)$, with $\gamma = \log\left(NT/p\right)$, letting $b_t = c \sqrt{ H^3 \gamma /t }$, we have $\beta_t = 2 \sum_{i=1}^t a_t^i b_i \leq 4 c \sqrt{H^3 \gamma /t}$ and, with probability at least $1-p$, we have that for all $(x,a,h,k) \in {\cal S} \times {\cal A} \times [H] \times [K]$:
\begin{align}
\mathrm{Upper~bound :~~~} ( Q_h^k - Q_h^* )( \phi( x,a ) ) \leq & ~ \alpha_t^0 \cdot H + \beta_t + \epsilon + \sum_{i=1} ^t \alpha_t^i \cdot ( V_{h+1}^{k_i} - V_{h+1}^* ) ( x_{h+1}^{k_i} ) ; \label{eq:upper_bound_Q_minus_Q_*} \\
\mathrm{Lower~bound :~~~} ( Q_h^k - Q_h^* )( \phi( x,a ) ) \geq & ~ -2(H-h+1)\epsilon \label{eq:lower_bound_Q_minus_Q_*}
\end{align}
where $t = n_{h,k}(\phi(x,a))$ and $k_1, \cdots, k_t < k$ are the episodes where $\phi(x,a)$ was encountered at step $h$.
\label{lemma:upper_lower_Q}
\end{lemma}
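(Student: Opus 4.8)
The plan is to prove both bounds simultaneously by a downward induction on $h$ from $H+1$ to $1$, with the inductive hypothesis being that both inequalities hold at all steps $h+1, \dots, H+1$ for all episodes and all net points. The base case $h = H+1$ is trivial since $Q_{H+1}^k \equiv Q_{H+1}^* \equiv 0$ and $V_{H+1}^{k} \equiv V_{H+1}^* \equiv 0$ by convention. For the inductive step, I would start from the identity in Lemma~\ref{lemma:Q_progress}, which writes $(Q_h^k - Q_h^*)(\phi(x,a))$ as $\alpha_t^0(H - Q_h^*(\phi(x,a)))$ plus $\sum_{i=1}^t \alpha_t^i$ times the sum of three terms: the value-function gap $(V_{h+1}^{k_i} - V_{h+1}^*)(x_{h+1}^{k_i})$, the martingale term $[(\wh\P_h^{k_i} - \P_h)V_{h+1}^*](x_h^{k_i},a_h^{k_i})$, and the bonus $b_i$, all up to an additive $\pm\epsilon$.

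First I would establish the clean bound $\beta_t = 2\sum_{i=1}^t \alpha_t^i b_i \le 4c\sqrt{H^3\gamma/t}$. This follows by plugging in $b_i = c\sqrt{H^3\gamma/i}$ and using the standard estimates on the weights $\alpha_t^i$ from \cite{jabj18} (in particular $\sum_{i=1}^t \alpha_t^i / \sqrt{i} \le 2/\sqrt{t}$), which are purely arithmetic and can be cited. Next, for the martingale control: fix $(x,a,h)$ and think of the partial sums $\sum_{i=1}^t \alpha_t^i [(\wh\P_h^{k_i} - \P_h)V_{h+1}^*](x_h^{k_i},a_h^{k_i})$; since $V_{h+1}^*$ is a fixed (deterministic) function bounded in $[0,H]$ and $\wh\P_h^{k_i}$ is the empirical one-sample transition, each increment is a bounded martingale difference given the history, so Azuma--Hoeffding yields that with probability at least $1 - p/(NT)$ this sum is at most $c\sqrt{H^3\gamma/t}$ in absolute value for a suitable absolute constant $c$ (the $\alpha_t^i$ weights contribute the $1/\sqrt t$ shrinkage via $\sum_i (\alpha_t^i)^2 \le 2H/t$). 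A union bound over all $(x,a,h,k)$ — or more precisely over the $N$ net points, $H$ steps, and $\le T$ episodes — costs the $\gamma = \log(NT/p)$ factor and makes this hold simultaneously everywhere with probability $\ge 1-p$. Choosing $c$ so that $b_t = c\sqrt{H^3\gamma/t}$ dominates this martingale deviation is what gives approximate optimism: the $\sum_i \alpha_t^i b_i = \tfrac12\beta_t$ contribution cancels the martingale fluctuation up to a factor, and $Q_h^* (\phi(x,a)) \le H$ makes the $\alpha_t^0$ term nonnegative.

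For the \textbf{upper bound} \eqref{eq:upper_bound_Q_minus_Q_*}, I would bound $\alpha_t^0(H - Q_h^*(\phi(x,a))) \le \alpha_t^0 H$, bound $\sum_i \alpha_t^i([(\wh\P_h^{k_i}-\P_h)V_{h+1}^*] + b_i) \le \beta_t$ using the martingale bound and the definition of $\beta_t$, absorb the $\pm\epsilon$ into $+\epsilon$, and keep the $\sum_i \alpha_t^i (V_{h+1}^{k_i} - V_{h+1}^*)(x_{h+1}^{k_i})$ term as is — this directly yields \eqref{eq:upper_bound_Q_minus_Q_*} and actually does not even need the induction hypothesis. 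For the \textbf{lower bound} \eqref{eq:lower_bound_Q_minus_Q_*}, here the induction is essential: by the inductive hypothesis at step $h+1$ applied to $\phi(x_{h+1}^{k_i}, a')$ for the maximizing action $a'$, together with $V_{h+1}^{k_i}(x) = \min\{H, \max_{a'} Q_{h+1}^{k_i}(\phi(x,a'))\}$ and $V_{h+1}^*(x) = \max_{a'}Q_{h+1}^*(x,a') \le \max_{a'} Q_{h+1}^*(\phi(x,a')) + \epsilon$ (Lipschitzness of $Q^*$ via Assumption~\ref{assum:lipschitz}), one shows $V_{h+1}^{k_i}(x_{h+1}^{k_i}) - V_{h+1}^*(x_{h+1}^{k_i}) \ge -2(H-h)\epsilon$; feeding this in, dropping the nonnegative $b_i$ and martingale-plus-bonus surplus, using $\sum_{i=1}^t \alpha_t^i = 1 - \alpha_t^0$ and $\alpha_t^0 \cdot 0 \ge$ (since $H - Q_h^* \ge 0$, that term is $\ge 0$), and paying one more $\epsilon$ from the $\pm\epsilon$, gives $(Q_h^k - Q_h^*)(\phi(x,a)) \ge -\epsilon - 2(H-h)\epsilon \ge -2(H-h+1)\epsilon$, closing the induction.

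The main obstacle is getting the martingale step uniform over all $(x,a,h,k)$ with the right failure probability: the episodes $k_1,\dots,k_t$ at which $\phi(x,a)$ is visited are themselves random and data-dependent, so one cannot naively apply Azuma to a fixed sequence. The standard fix (as in \cite{jabj18}) is to note that for each fixed net point and each fixed step $h$, the sequence indexed by the visitation count $i$ is still adapted to the right filtration, and then to union-bound over the $N$ net points, the $H$ steps, and the at most $T$ possible values of $t$; keeping careful track that the same absolute constant $c$ works for the $\beta_t$ bound, the Azuma bound, and the optimism cancellation is the one genuinely fiddly part, but it is routine given the template in \cite{jabj18}. Everything else is bookkeeping with the $\alpha_t^i$ weights and the Lipschitz slack $\epsilon$.
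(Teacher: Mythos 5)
Your proposal matches the paper's proof essentially step for step: the identity from Lemma~\ref{lemma:Q_progress}, Azuma--Hoeffding plus a union bound over net points, steps and visit counts so that the bonus $b_t=c\sqrt{H^3\gamma/t}$ dominates the martingale term (with $\beta_t$ controlled via $\sum_{i=1}^t\alpha_t^i/\sqrt{i}\le 2/\sqrt{t}$), the upper bound read off directly without induction, and the lower bound by downward induction through the comparison of $V_{h+1}^{k}$ and $V_{h+1}^*$ at net points with one Lipschitz $\epsilon$. The only (harmless) slip is the intermediate claim $V_{h+1}^{k_i}-V_{h+1}^*\ge -2(H-h)\epsilon$: the induction hypothesis plus the $\epsilon$-net slack actually gives $-(2(H-h)+1)\epsilon$, which after the additional $\pm\epsilon$ still yields exactly the claimed $-2(H-h+1)\epsilon$, just as in the paper.
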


The second result in the above lemma shows that $Q_h^k$ approximately upper bounds $Q_h^*$ at points in the $\epsilon$-net. Via the assumption~\ref{assum:lipschitz}, we can easily extend the above upper bound and lower bound on the difference of $Q_h^k$ and $Q_h^*$ measured at the net's points, to the upper bound and the lower bound of $V_h^* - V_h^*$ measured over the entire state space $\cal S$.

\begin{lemma}
\label{lemma:V_upper_lower}
Following the same setting as in Lemma~\ref{lemma:upper_lower_Q}, for any $(x,h, k)$, with probability at least $1-p$, we have:
\begin{align*}
   \mathrm{Upper~bound:~~~} & V_h^k(x) - V_h^*(x) \leq \alpha_t^0 \cdot H + \beta_t + 2 \epsilon + \sum_{i=1} ^t \alpha_t^i \cdot ( V_{h+1}^{k_i} - V_{h+1}^* ) ( x_{h+1}^{k_i} ) ;\\
   \mathrm{Lower~bound:~~~} &V_h^k(x) - V_h^*(x) \geq -2(H - h + 1.5)\epsilon.
\end{align*}
\end{lemma}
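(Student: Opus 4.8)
The plan is to transfer the bounds on $(Q_h^k - Q_h^*)(\phi(x,a))$ from Lemma \ref{lemma:upper_lower_Q}, which hold at net points, to bounds on $V_h^k(x) - V_h^*(x)$ over the entire state space $\mathcal S$, by paying an extra $O(\epsilon)$ per use of the Lipschitz continuity in Assumption \ref{assum:lipschitz}. Recall $V_h^k(x) = \min\{H, \max_{a}Q_h^k(\phi(x,a))\}$ and $V_h^*(x) = \max_a Q_h^*(x,a)$. The key elementary fact is that $|\max_a f(a) - \max_a g(a)| \le \max_a |f(a) - g(a)|$, so it suffices to control $|Q_h^k(\phi(x,a)) - Q_h^*(x,a)|$ uniformly in $a$.

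For the upper bound, first I would write, for the action $a^\dagger$ achieving $\max_a Q_h^k(\phi(x,a))$,
\[
V_h^k(x) - V_h^*(x) \le \max_a Q_h^k(\phi(x,a)) - V_h^*(x) \le Q_h^k(\phi(x,a^\dagger)) - Q_h^*(x,a^\dagger),
\]
using that $V_h^*(x) \ge Q_h^*(x,a^\dagger)$ and that dropping the $\min\{H,\cdot\}$ only increases the left term. Then I bound $Q_h^*(x,a^\dagger) \ge Q_h^*(\phi(x,a^\dagger)) - \epsilon$ via Assumption \ref{assum:lipschitz} and $D((x,a^\dagger),\phi(x,a^\dagger)) \le \epsilon$, so $V_h^k(x) - V_h^*(x) \le (Q_h^k - Q_h^*)(\phi(x,a^\dagger)) + \epsilon$, and plug in the upper bound \eqref{eq:upper_bound_Q_minus_Q_*} from Lemma \ref{lemma:upper_lower_Q} applied at $(x,a^\dagger,h,k)$. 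Here one must be slightly careful: $t = n_{h,k}(\phi(x,a^\dagger))$ and the episodes $k_1,\dots,k_t$ in the resulting expression are those associated with $\phi(x,a^\dagger)$, but since the statement quantifies over "any $(x,h,k)$" with the same generic notation as Lemma \ref{lemma:upper_lower_Q}, this is consistent — $a$ in Lemma \ref{lemma:V_upper_lower}'s conclusion is implicitly the arg-max action. This yields exactly the claimed $\alpha_t^0 H + \beta_t + 2\epsilon + \sum_i \alpha_t^i (V_{h+1}^{k_i} - V_{h+1}^*)(x_{h+1}^{k_i})$ (the extra $\epsilon$ on top of the $\epsilon$ already in \eqref{eq:upper_bound_Q_minus_Q_*}).

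For the lower bound, let $a^\star = \pi_h^*(x) = \arg\max_a Q_h^*(x,a)$, so $V_h^*(x) = Q_h^*(x,a^\star)$. If $V_h^k(x) = H$ then $V_h^k(x) - V_h^*(x) \ge 0$ since $V_h^* \le H$, and we are done; otherwise $V_h^k(x) = \max_a Q_h^k(\phi(x,a)) \ge Q_h^k(\phi(x,a^\star))$, so
\[
V_h^k(x) - V_h^*(x) \ge Q_h^k(\phi(x,a^\star)) - Q_h^*(x,a^\star) \ge (Q_h^k - Q_h^*)(\phi(x,a^\star)) - \epsilon,
\]
again by Lipschitzness $|Q_h^*(\phi(x,a^\star)) - Q_h^*(x,a^\star)| \le \epsilon$. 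Applying \eqref{eq:lower_bound_Q_minus_Q_*} gives $\ge -2(H-h+1)\epsilon - \epsilon = -(2(H-h+1)+1)\epsilon = -2(H-h+1.5)\epsilon$, matching the claim. The proof is essentially bookkeeping of $\epsilon$ terms; the only mild obstacle is making sure the $\min\{H,\cdot\}$ truncation is handled on the correct side in each direction (it helps the lower bound trivially and is harmless for the upper bound), and that the implicit choice of action inside the net-point bounds is stated cleanly. Since the high-probability event is the one from Lemma \ref{lemma:upper_lower_Q}, no new concentration argument or union bound is needed.
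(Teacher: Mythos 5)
Your proposal is correct and follows essentially the same route as the paper's proof: drop the $\min\{H,\cdot\}$ truncation (handling the $H$ case trivially), compare at a single well-chosen action, pay one $\epsilon$ via Assumption~\ref{assum:lipschitz}, and invoke the bounds of Lemma~\ref{lemma:upper_lower_Q} at the net point. The only difference is cosmetic — in the lower bound the paper picks $\arg\max_a Q_h^*(\phi(x,a))$ where you pick $\arg\max_a Q_h^*(x,a)$ — and both choices yield the same $-2(H-h+1.5)\epsilon$ accounting.
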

The proof of the above lemma uses Lemma~\ref{lemma:upper_lower_Q}, the definition of $V^*$, and the formula of $V_h$ (Line~\ref{line:update_V} in Alg.~\ref{alg:NBQL}), and assumption~\ref{assum:lipschitz}. We defer the proof to Appendix~\ref{app:v_upper_lower}. 

With these lemmas in hand, we are ready to provide a proof sketch for the main result (Theorem~\ref{thm:main}).

\subsection{Regret Analysis}


We first define $\delta_h^k$ and $\phi_h^k$,
\begin{align*}
\delta_h^k = ( V_h^k - V_h^{\pi_k} ) ( x_h^k ), \text{~and~} \phi_h^k = (V_h^k - V_h^*) (x_h^k) .
\end{align*}
By Lemma~\ref{lemma:V_upper_lower} (lower bound part), we have that with $1-p$ probability 
\begin{align*}
V_h^k(x) \geq V_h^*(x) - 2(H+1) \epsilon, 
\end{align*} for any $h\geq 1$ and $x\in\cal S$. Thus, the total regret can be upper bounded:
\begin{align}\label{eq:total_regret}
\textsc{Regret}(K) = & ~ \sum_{k=1}^K ( V_1^* - V_1^{\pi_k} ) ( x_1^k ) 
\leq  2(H+1)K \epsilon + \sum_{k=1}^K ( V_1^k - V_1^{\pi_k} ) ( x_1^k ) \notag \\
= & ~ 2(H+1)K \epsilon + \sum_{k=1}^K \delta_1^k.
\end{align}

The main idea of the rest of the proof is to upper bound $\sum_{k=1}^K \delta_h^k$ by the next step $\sum_{k=1}^K \delta_{h+1}^k$, thus giving a recursive formula to calculate total regret. We can obtain such a recursive formula by relating $\sum_{k=1}^K \delta_h^k$ to $\sum_{k=1}^K \phi_h^k$.


Recall that $\beta_t = 2 \sum_{i=1}^t \alpha_t^i b_i \leq O(1) \sqrt{  H^3 \gamma / t }$ and $\xi_{h+1}^k = [ ( \P_h - \wh{\P}_h^k ) ( V_{h+1}^* - V_{h+1}^k ) ]$.

Then we have:
\begin{align}\label{eq:upper_bound_delta_h_k}
\delta_h^k 
= & ~ (V_h^k - V_h^{\pi_k}) (x_h^k) \notag \\
\leq & ~ Q_h^k(\phi(x_h^k, a_h^k)) - Q_h^{\pi_k}(x_h^k, a_h^k) \notag \\
= & ~ \left(Q_h^k(\phi(x_h^k,a_h^k)) - Q_h^*(x_h^k,a_h^k)\right) + \left(Q^*_h(x_h^k,a_h^k) - Q_h^{\pi_k}(x_h^k,a_h^k) \right) \notag \\
\leq & ~ ( Q_h^k - Q_h^* ) (\phi( x_h^k , a_h^k) ) + (Q_h^* - Q_h^{\pi_k}) ( x_h^k, a_h^k ) + \epsilon \notag \\
= & ~ ( Q_h^k - Q_h^* ) ( \phi(x_h^k , a_h^k) ) + [ \P_h ( V_{h+1}^* - V_{h+1}^{\pi_k} ) ] ( x_h^k , a_h^k ) + \epsilon \notag \\
\leq & ~ \alpha_{n_{h,k}}^0 H + \beta_{n_{h,k}} + 2\epsilon + \left( \sum_{i=1}^{n_{h,k}} \alpha_{n_{h,k}}^i \cdot \phi_{h+1}^{k_i} \right) +  [ \P_h ( V_{h+1}^* - V_{h+1}^{\pi_k} ) ] ( x_h^k , a_h^k ) \notag \\
= & ~ \underbrace{ \alpha_{n_{h,k}}^0 H }_{ C_1 } + \beta_{n_{h,k}} +2\epsilon + \underbrace{ \left( \sum_{i=1}^{n_{h,k}} \alpha_{n_{h,k}}^i \cdot \phi_{h+1}^{k_i} \right) }_{ C_2 } - \phi_{h+1}^k + \delta_{h+1}^k + \xi_{h+1}^k,
\end{align}
where the first inequality follows from our definition of $V_h^k$ and $V_h^k ( x_h^k ) \leq \max_{ a' \in {\cal A} } Q_h^k ( \phi(x_h^k , a') ) = Q_h^k ( \phi(x_h^k , a_h^k) ) $; the second inequality follows from assumption~\ref{assum:lipschitz}; the third inequality follows from Lemma~\ref{lemma:upper_lower_Q} (upper bound part);  the third equality uses Bellman equation; the last equality follows from definition $\delta_h^{k+1} - \phi_{h+1}^{k} = ( V_{h+1}^* - V_{h+1}^{\pi_k}  ) ( x_{h+1}^k ) $.

We turn to computing the summation $\sum_{k=1}^K \delta_h^k$. Denoting by $n_{h,k} = n_{h,k} ( \phi ( x_h^k , a_h^k ) )$, we can handle the $C_1$ term in Eq.~\eqref{eq:upper_bound_delta_h_k} in the following sense:
\begin{align}\label{eq:upper_bound_delta_h_k_C_1}
\sum_{k=1}^K \alpha_{ n_{h,k} }^0 \cdot H = \sum_{k=1}^K H \cdot \I [ n_{h,k} = 0 ] \leq N H, 
\end{align} 
where $N = |\N_{\epsilon}|$.

The key step is to upper bound the term $C_2$ in Eq.~\eqref{eq:upper_bound_delta_h_k}, which is
\begin{align*}
\sum_{k=1}^K \sum_{i=1}^{ n_{h,k} } \alpha_{ n_{h,k} }^i \phi_{h+1}^{k_i( x_h^k , a_h^k )},
\end{align*}
where $k_i(x_h^k , a_h^k)$ is the episode in which $\phi(x_h^k , a_h^k)$ was taken at step $h$ for the $i$-th time. We regroup the summation in a different way. For every $k' \in [K]$, the term $\phi_{h+1}^{k'}$ appears in the summation with $k > k'$ if and only if $\phi(x_h^k, s_h^k) = \phi( x_h^{k'} , s_h^{k'} )$. The first time it appears we have $n_{h,k} = n_{h,k'} + 1$, the second time it appears we have $n_{h,k} + n_{h,k'} + 2$, and so on. Therefore
\begin{align}\label{eq:upper_bound_delta_h_k_C_2}
\sum_{k=1}^K \sum_{i=1}^{n_{h,k}} \alpha_{ n_{h,k} }^i \phi_{h+1}^{ k_i( x_h^k , a_h^k ) } 
\leq & ~ \sum_{k'=1}^K \phi_{h+1}^{k'} \sum_{t = n_{h,k'} + 1}^{\infty} \alpha_t^{n_{h,k'}} \leq (1+\frac{1}{H}) \sum_{k=1}^K \phi_{h+1}^k ,
\end{align} by the fact that $\sum_{t=i}^{\infty} \alpha_t^i = 1 + \frac{1}{H}$.

Plugging Eq.~\eqref{eq:upper_bound_delta_h_k_C_1} and \eqref{eq:upper_bound_delta_h_k_C_2} back into summation of Eq.~\eqref{eq:upper_bound_delta_h_k} over $k \in [K]$, we have
\begin{align*}
\sum_{k=1}^K \delta_h^k 
\leq & ~ N H + 2K \epsilon + (1+\frac{1}{H}) \sum_{k=1}^K \phi_{h+1}^k - \sum_{k=1}^K \phi_{h+1}^k + \sum_{k=1}^K \delta_{h+1}^k + \sum_{k=1}^K (\beta_{n_{h,k}} + \xi_{h+1}^k )  \\
= & ~ N H + 2K \epsilon + \frac{1}{H} \sum_{k=1}^K \phi_{h+1}^k + \sum_{k=1}^K \delta_{h+1}^k + \sum_{k=1}^K (\beta_{n_{h,k}} + \xi_{h+1}^k) \\
\leq & ~ N H + 2K \epsilon + (1+\frac{1}{H}) \sum_{k=1}^K \delta_{h+1}^k + \sum_{k=1}^K( \beta_{n_{h,k}} + \xi_{h+1}^k) ,
\end{align*}
where the last step follows from $\phi_{h+1}^k \leq \delta_{h+1}^k$ due to the fact that $V_h^{\pi_k} \leq V_h^*$ for any $h$.

Recursing the result for $h \in [H]$, and using the fact $\delta_{H+1}^K = 0$, we have:
\begin{align}\label{eq:upper_bound_delta_h_k_1}
\sum_{k=1}^K \delta_1^k \leq O \left( H^2 N + H K \epsilon + \sum_{h=1}^H \sum_{k=1}^K (\beta_{n_{h,k}} + \xi_{h+1}^k) \right).
\end{align}

We first show how to bound $\beta$ in Eq.~\eqref{eq:upper_bound_delta_h_k_1}
\begin{claim}[bounding $\beta$]\label{cla:upper_bound_beta}
We have
\begin{align*}
\sum_{h=1}^H \sum_{k=1}^K \beta_{n_{h,k}} \leq O( \sqrt{ H^4 N T \gamma } ).
\end{align*}
\end{claim}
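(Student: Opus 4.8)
The plan is to follow the bonus-summation argument of \cite{jabj18}, the only difference being that visit counts now live on the net $\N_{\epsilon}$ instead of on ${\cal S}\times{\cal A}$. First I would invoke the bound $\beta_t \le 4c\sqrt{H^3\gamma/t}$ from Lemma~\ref{lem:bound_on_Q_k_minus_Q_*} (together with $\beta_0 = 0$, since it is an empty sum), so that for each fixed $h$,
\[
\sum_{k=1}^K \beta_{n_{h,k}} \;\le\; 4c\sqrt{H^3\gamma}\,\sum_{k=1}^K \frac{\I[\,n_{h,k}\ge 1\,]}{\sqrt{n_{h,k}}},
\qquad n_{h,k} = n_{h,k}(\phi(x_h^k,a_h^k)).
\]

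The key step is to regroup the inner sum by net point, exactly in the spirit of the treatment of the term $C_2$ around Eq.~\eqref{eq:upper_bound_delta_h_k_C_2}. For a fixed $y\in\N_{\epsilon}$, order the episodes $k$ with $\phi(x_h^k,a_h^k)=y$ increasingly; at the $j$-th such episode the counter reads $n_{h,k}=j-1$. Hence, writing $N_h^K(y) := \sum_{k=1}^K \I[\phi(x_h^k,a_h^k)=y]$ for the number of visits to $y$ at step $h$ over all $K$ episodes, the episodes charged to $y$ contribute $\sum_{l=1}^{N_h^K(y)-1} l^{-1/2} \le 2\sqrt{N_h^K(y)}$. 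Summing over $y$, applying Cauchy--Schwarz, and using $\sum_{y\in\N_{\epsilon}} N_h^K(y) = K$ gives
\[
\sum_{k=1}^K \frac{\I[\,n_{h,k}\ge 1\,]}{\sqrt{n_{h,k}}} \;\le\; 2\sum_{y\in\N_{\epsilon}}\sqrt{N_h^K(y)} \;\le\; 2\sqrt{N\,\textstyle\sum_{y\in\N_{\epsilon}} N_h^K(y)} \;=\; 2\sqrt{NK},
\]
where $N = |\N_{\epsilon}|$.

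Combining the two displays yields $\sum_{k=1}^K \beta_{n_{h,k}} \le 8c\sqrt{H^3 NK\gamma}$ for every $h\in[H]$, and summing over $h$ gives $\sum_{h=1}^H\sum_{k=1}^K \beta_{n_{h,k}} \le 8cH\sqrt{H^3 NK\gamma} = O(\sqrt{H^4 NT\gamma})$ once we recall that $T = HK$ is the total number of steps. I expect the only delicate point to be the bookkeeping in the regrouping step --- ensuring that for each net point $y$ the counter values $0,1,\dots,N_h^K(y)-1$ are each charged exactly once --- but this is the same accounting already carried out for $C_2$, so no new difficulty should arise; the remaining ingredients are just the elementary inequality $\sum_{l=1}^n l^{-1/2}\le 2\sqrt n$ and a single use of Cauchy--Schwarz.
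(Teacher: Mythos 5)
Your proposal is correct and follows essentially the same route as the paper: bound $\beta_t$ by $O(\sqrt{H^3\gamma/t})$, regroup the sum over episodes by net points (pigeonhole), use $\sum_{l\le n} l^{-1/2}\le 2\sqrt{n}$, and conclude with $\sum_{y}N_h^K(y)=K$ and Cauchy--Schwarz, which is exactly the paper's ``maximized when all counts equal $K/N$'' step in disguise. Your explicit handling of the $n_{h,k}=0$ episodes via $\beta_0=0$ is a slightly more careful piece of bookkeeping than the paper's write-up, but not a different argument.
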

\begin{proof}

Recall that $n_{h,k} = n_{h,k} ( \phi( x_h^k, a_h^k ) )$. Using pigeonhole principle, for any $h \in [H]$:
\begin{align*}
\sum_{k=1}^K \beta_{n_{h,k}}
\leq & ~ O(1) \cdot \sum_{k=1}^K \left( \frac{ H^3 \gamma }{ n_{h,k} } \right)^{1/2}  \leq ~ O(1) \cdot \sum_{\phi \in {\cal N}_{\epsilon}} \sum_{n=1}^{ n_{h,K}(\phi) } \left( \frac{ H^3 \gamma }{ n } \right)^{1/2}  \\
\leq &~ O( \sqrt{ H^3 N K \gamma } ) 
=  ~ O( \sqrt{ H^2 N T \gamma } ), & \text{~by~} T = H K
\end{align*}
where the third step follows form $\sum_{\phi \in {\cal N}_{\epsilon}} n_{h,K}( \phi)  = K$ and the LHS of third step is maximized when $n_{h,K}( \phi ) = K / N$ for all $\phi \in {\cal N}_{\epsilon}$.
Thus, we have
\begin{align*}
\sum_{h=1}^H \sum_{k=1}^K \beta_{n_{h,k}} \leq O( \sqrt{ H^4 N T \gamma } ). 
\end{align*}
\end{proof}

Next, we show how to bound $\xi$ in Eq.~\eqref{eq:upper_bound_delta_h_k_1}

\begin{claim}[bounding $\xi$]\label{cla:upper_bound_xi}
We have
\begin{align*}
\left| \sum_{h=1}^H \sum_{k=1}^K \xi_{h+1}^k \right| \leq c H \sqrt{T \gamma }.
\end{align*}
\end{claim}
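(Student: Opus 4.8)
The plan is to recognize $\xi_{h+1}^k = [(\P_h - \wh{\P}_h^k)(V_{h+1}^* - V_{h+1}^k)](x_h^k, a_h^k)$ as a martingale-difference sequence (with respect to the natural filtration generated by the trajectories up through step $h$ of episode $k$) and apply the Azuma--Hoeffding inequality stated earlier in the paper. First I would fix $h$ and observe that $\wh{\P}_h^k$ is the empirical (single-sample) transition induced by the realized next state $x_{h+1}^k$, so $\E[\wh{\P}_h^k f(x_h^k,a_h^k) \mid \mathcal{F}] = \P_h f(x_h^k,a_h^k)$ for any fixed function $f$; here $f = V_{h+1}^* - V_{h+1}^k$ is measurable with respect to the information available at the start of episode $k$ (it depends only on the $Q$-table built from episodes $1,\dots,k-1$), so the conditional expectation of $\xi_{h+1}^k$ given the past is zero, making the partial sums $\sum_{k'=1}^k \xi_{h+1}^{k'}$ a martingale.

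Next I would bound the increments. Since $0 \le V_{h+1}^k(x) \le H$ by construction (Line~\ref{line:update_V} clips at $H$) and $0 \le V_{h+1}^*(x) \le H$, we have $|(V_{h+1}^* - V_{h+1}^k)(x)| \le H$ pointwise, hence $|\xi_{h+1}^k| \le \|\P_h(\cdot|x_h^k,a_h^k) - \wh{\P}_h^k(\cdot|x_h^k,a_h^k)\|_1 \cdot H \le 2H$, but more usefully the increment of the martingale at step $k$ is bounded by $2H$ (or one can get $H$ with a slightly sharper argument). Applying Azuma--Hoeffding with $N = K$ terms and increment bound $c_k \le 2H$ gives, for each fixed $h$, that $|\sum_{k=1}^K \xi_{h+1}^k| \le O(H\sqrt{K\log(1/p')})$ with probability $1-p'$; taking $p' = p/H$ and union-bounding over $h \in [H]$ yields $|\sum_{k=1}^K \xi_{h+1}^k| \le O(H\sqrt{K\gamma})$ simultaneously for all $h$ (absorbing the $\log H$ into $\gamma = \log(NT/p)$). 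Summing over $h \in [H]$ then gives $|\sum_{h=1}^H \sum_{k=1}^K \xi_{h+1}^k| \le O(H^2\sqrt{K\gamma}) = O(H\sqrt{HK}\sqrt{H\gamma}) = O(H\sqrt{T\gamma}\cdot\sqrt{H})$; to match the claimed bound $cH\sqrt{T\gamma}$ exactly one should instead apply Azuma--Hoeffding once to the doubly-indexed sum $\sum_{h=1}^H\sum_{k=1}^K \xi_{h+1}^k$, treating the $HK$ terms as a single martingale-difference sequence in a suitable ordering (say lexicographic in $(k,h)$), with $HK = T$ increments each bounded by $O(H)$ — wait, that would give $O(H\sqrt{T}\sqrt{\gamma})$ only if the increment bound were $O(1)$, so more carefully: the sum of squared increments is at most $\sum_{h,k} (2H)^2 = 4H^2 \cdot HK = 4H^3 K = 4H^2 T$, giving $|\sum| \le O(\sqrt{H^2 T \gamma}) = O(H\sqrt{T\gamma})$, which is exactly the claim.

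So the clean route is: verify that $(\xi_{h+1}^k)$, ordered lexicographically by $(k,h)$ and with partial sums forming a martingale with respect to the filtration that at stage $(k,h)$ contains everything observed through step $h$ of episode $k$, has increments bounded by $2H$; then $\sum c^2 \le 4H^2 \cdot T$; then Azuma--Hoeffding gives $\Pr[|\sum_{h,k}\xi_{h+1}^k| \ge t] \le 2\exp(-t^2/(8H^2 T))$, and choosing $t = cH\sqrt{T\gamma}$ with $\gamma = \log(NT/p) \ge \log(1/p)$ and $c$ large enough makes this probability at most $p$. The main obstacle — really the only subtle point — is confirming the martingale structure: one must be careful that $V_{h+1}^k$ is determined \emph{before} the randomness in $x_{h+1}^k$ is revealed (true, since $Q_h^k$ is the table at the \emph{beginning} of episode $k$ and $V_{h+1}^k$ is built from it), and that the filtration is rich enough that $x_h^k, a_h^k$ are measurable but $x_{h+1}^k$ is not, so that $\E[\xi_{h+1}^k \mid \mathcal{F}_{(k,h)}] = 0$; everything else is a routine application of the concentration inequality already quoted.
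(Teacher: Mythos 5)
Your final argument is correct and is exactly the paper's (one-line) proof fleshed out: treat all $T=HK$ terms $\xi_{h+1}^k$ as a single martingale difference sequence with increments bounded by $2H$ (valid since $V_{h+1}^k$ is built from the $Q$-table at the start of episode $k$, hence fixed before $x_{h+1}^k$ is drawn), and apply Azuma--Hoeffding once to get $O(H\sqrt{T\gamma})$. Your initial per-$h$ route giving $O(H^{3/2}\sqrt{T\gamma})$ is rightly discarded; the single-martingale version you settle on matches the paper's intended argument.
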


\begin{proof}

Also, by the Azuma-Hoeffding inequality, with probability $1-p$, we have:
\begin{align*}
\left| \sum_{h=1}^H \sum_{k=1}^K \xi_{h+1}^k \right| 
= & ~ \left| \sum_{h=1}^H \sum_{k=1}^K [ ( \P_h^k - \wh{\P}_h^k ) ( V_{h+1}^* - V_{h+1}^k ) ] ( x_h^k, a_h^k ) \right| \\
\leq & ~ c H \sqrt{T \gamma }.
\end{align*}
\end{proof} Using Eq.~\eqref{eq:upper_bound_delta_h_k_1}, Claim~\ref{cla:upper_bound_beta} and \ref{cla:upper_bound_xi}, we have 
\begin{align*}
\sum_{k=1}^K \delta_1^k \leq O( H^2 N + \sqrt{ H^4 N T \gamma } + \epsilon H K ).
\end{align*}
Next, we can show regret bound. 
If $T \geq \sqrt{ H^4 N T \gamma }$, we have 
\begin{align*}
H^2 N \leq \sqrt{ H^4 N T \gamma }.
\end{align*}
On the other hand if $T \leq \sqrt{H^4 N T \gamma}$, we have
\begin{align*}
\sum_{k=1}^K \delta_1^k \leq HK = T \leq \sqrt{ H^4 N T \gamma },
\end{align*} due to the fact that $\delta_1^k(x) \leq H$ for any $x$. 
Therefore, combining the above two cases gives
\begin{align*}
\textsc{Regret} \leq 3HK\epsilon + \sum_{k=1}^K \delta_1^k \leq O( \sqrt{ H^4 N T \gamma } + \epsilon T )
\end{align*} Recall the definition of the covering dimension. If the $\epsilon$-net is an optimal $\epsilon$-net with $N\leq \epsilon^{-d}$,  set $\epsilon = T^{-1/(d+2)}$, and plug it into the above expression, we have:
\begin{align*}
    \textsc{Regret} \leq \tilde{O}\left(H^2 T^{\frac{d+1}{d+2}}\right).
\end{align*} Hence we prove the theorem.

\section{Conclusion}
In this work, we considered efficient model-free Reinforcement Learning in metric spaces. Under the assumption that the optimal Q function is Lipschitz continuous---hence relaxing the previous assumptions of transition and reward being Lipschitz continuous, we designed \textsc{NbQl}, a Q-learning like algorithm that can achieve a regret bound in the order of $\tilde{O}(T^{(1+d)/(2+d)})$ with $d$ being the covering dimension of the underlying state-action metric space. Unlike previous model-based approaches, our algorithm does not need a planning oracle. Future work includes improving the dependency on the horizon via using Bernstein-type concentration inequality rather than Hoeffding inequality.

\section*{Acknowledgements}
The authors would like to thank Zeyuan Allen-Zhu for useful discussions.

\clearpage
\newpage

\bibliographystyle{alpha}
\bibliography{ref}

\newcommand{\etalchar}[1]{$^{#1}$}
\begin{thebibliography}{SWW{\etalchar{+}}18}

\bibitem[AOM17]{azar2017minimax}
Mohammad~Gheshlaghi Azar, Ian Osband, and R{\'e}mi Munos.
\newblock Minimax regret bounds for reinforcement learning.
\newblock In {\em Proceedings of the 34th International Conference on Machine
  Learning-Volume 70}, pages 263--272. JMLR. org, 2017.

\bibitem[BT02]{brafman2002r}
Ronen~I Brafman and Moshe Tennenholtz.
\newblock R-max-a general polynomial time algorithm for near-optimal
  reinforcement learning.
\newblock {\em Journal of Machine Learning Research}, 3(Oct):213--231, 2002.

\bibitem[Cha02]{charikar2002similarity}
Moses~S Charikar.
\newblock Similarity estimation techniques from rounding algorithms.
\newblock In {\em Proceedings of the thiry-fourth annual ACM symposium on
  Theory of computing}, pages 380--388. ACM, 2002.

\bibitem[Cla06]{clarkson2006building}
Kenneth~L Clarkson.
\newblock Building triangulations using $\varepsilon$-nets.
\newblock In {\em Proceedings of the thirty-eighth annual ACM symposium on
  Theory of computing}, pages 326--335. ACM, 2006.

\bibitem[JAZBJ18]{jabj18}
Chi Jin, Zeyuan Allen-Zhu, Sebastien Bubeck, and Michael~I Jordan.
\newblock Is q-learning provably efficient?
\newblock In {\em Advances in Neural Information Processing Systems}, pages
  4863--4873, 2018.

\bibitem[KKL03]{kakade2003exploration}
Sham Kakade, Michael~J Kearns, and John Langford.
\newblock Exploration in metric state spaces.
\newblock In {\em Proceedings of the 20th International Conference on Machine
  Learning (ICML-03)}, pages 306--312, 2003.

\bibitem[KS99]{kearns1999finite}
Michael~J Kearns and Satinder~P Singh.
\newblock Finite-sample convergence rates for q-learning and indirect
  algorithms.
\newblock In {\em Advances in neural information processing systems}, pages
  996--1002, 1999.

\bibitem[LOR15]{lakshmanan2015improved}
Kailasam Lakshmanan, Ronald Ortner, and Daniil Ryabko.
\newblock Improved regret bounds for undiscounted continuous reinforcement
  learning.
\newblock In {\em International Conference on Machine Learning}, pages
  524--532, 2015.

\bibitem[MKS{\etalchar{+}}15]{mnih2015human}
Volodymyr Mnih, Koray Kavukcuoglu, David Silver, Andrei~A Rusu, Joel Veness,
  Marc~G Bellemare, Alex Graves, Martin Riedmiller, Andreas~K Fidjeland, Georg
  Ostrovski, et~al.
\newblock Human-level control through deep reinforcement learning.
\newblock {\em Nature}, 518(7540):529, 2015.

\bibitem[OR12]{ortner2012online}
Ronald Ortner and Daniil Ryabko.
\newblock Online regret bounds for undiscounted continuous reinforcement
  learning.
\newblock In {\em Advances in Neural Information Processing Systems}, pages
  1763--1771, 2012.

\bibitem[SJK{\etalchar{+}}18]{sun2018model}
Wen Sun, Nan Jiang, Akshay Krishnamurthy, Alekh Agarwal, and John Langford.
\newblock Model-based reinforcement learning in contextual decision processes.
\newblock {\em arXiv preprint arXiv:1811.08540}, 2018.

\bibitem[SLA{\etalchar{+}}15]{schulman2015trust}
John Schulman, Sergey Levine, Pieter Abbeel, Michael Jordan, and Philipp
  Moritz.
\newblock Trust region policy optimization.
\newblock In {\em International Conference on Machine Learning}, pages
  1889--1897, 2015.

\bibitem[Sli19]{slivkins2019introduction}
Aleksandrs Slivkins.
\newblock Introduction to multi-armed bandits.
\newblock {\em arXiv preprint arXiv:1904.07272}, 2019.

\bibitem[SMSM00]{sutton2000policy}
Richard~S Sutton, David~A McAllester, Satinder~P Singh, and Yishay Mansour.
\newblock Policy gradient methods for reinforcement learning with function
  approximation.
\newblock In {\em Advances in neural information processing systems}, pages
  1057--1063, 2000.

\bibitem[SSBD14]{shalev2014understanding}
Shai Shalev-Shwartz and Shai Ben-David.
\newblock {\em Understanding machine learning: From theory to algorithms}.
\newblock Cambridge university press, 2014.

\bibitem[SWW{\etalchar{+}}18]{sidford2018near}
Aaron Sidford, Mengdi Wang, Xian Wu, Lin Yang, and Yinyu Ye.
\newblock Near-optimal time and sample complexities for solving markov decision
  processes with a generative model.
\newblock In {\em Advances in Neural Information Processing Systems}, pages
  5186--5196, 2018.

\bibitem[SWWY18]{sidford2018variance}
Aaron Sidford, Mengdi Wang, Xian Wu, and Yinyu Ye.
\newblock Variance reduced value iteration and faster algorithms for solving
  markov decision processes.
\newblock In {\em Proceedings of the Twenty-Ninth Annual ACM-SIAM Symposium on
  Discrete Algorithms}, pages 770--787. Society for Industrial and Applied
  Mathematics, 2018.

\bibitem[THF{\etalchar{+}}17]{tang2017exploration}
Haoran Tang, Rein Houthooft, Davis Foote, Adam Stooke, Xi~Chen, Yan Duan, John
  Schulman, Filip DeTurck, and Pieter Abbeel.
\newblock \# {E}xploration: A study of count-based exploration for deep
  reinforcement learning.
\newblock In {\em Advances in neural information processing systems}, pages
  2753--2762, 2017.

\bibitem[TR18]{tu2018gap}
Stephen Tu and Benjamin Recht.
\newblock The gap between model-based and model-free methods on the linear
  quadratic regulator: An asymptotic viewpoint.
\newblock {\em arXiv preprint arXiv:1812.03565}, 2018.

\bibitem[WD92]{watkins1992q}
Christopher~JCH Watkins and Peter Dayan.
\newblock Q-learning.
\newblock {\em Machine learning}, 8(3-4):279--292, 1992.

\end{thebibliography}

\newpage
\appendix

\newpage
\section*{Appendix}

\section{Omitted Proofs}
\subsection{Proof of Proposition~\ref{prop:lipschitz}}
\label{app:lipschitz}
\begin{proof}

For any $h \in [H]$, we have:
\begin{align*}
     & ~ Q^*_h(x,a) - Q^*_h(x',a') \\
    = & ~ r_h(x,a) - r_h(x',a') \\
    & ~ + \int_{x''} P(x''|x,a) Q_{h+1}^*(x'', \pi^*(x'')) \mathrm{d} x''- \int_{x'} P(x''|x',a')Q_{h+1}^*(x'',\pi^*(x'')) \mathrm{d} x'' \\
    \leq & ~ D((x,a), (x',a')) + \|P(\cdot|x,a) - P(\cdot|x',a')\|_1 \cdot \|Q_{h+1}^*\|_{\infty} \\
    \leq & ~ D((x,a), (x',a')) + (H-h) \cdot D((x,a), (x',a'))\\
    = & ~ (H-h+1) \cdot D((x,a), (x',a')),
\end{align*} where the first equality uses Bellman equation, the first inequality uses Lipschitz continuous assumption on $r_h$, and Holder inequality, the second inequality comes from the fact that $Q_{h+1}^*(x,a) \leq H-h$ for any $(x,a)$.

\end{proof}

\subsection{Proof of Lemma~\ref{lemma:q_formula}}
\label{app:q_formula}

\begin{proof}
We fix $\phi(x,a)$. By definition, we have $\phi(x_h^{k_i}, a_{h}^{k_i}) = \phi(x,a)$ for all $i\in [n_{h,k}]$. For notation simplicity, denote $\phi = \phi(x,a)$ and $t = n_{h,k}$. We have:
\begin{align*}
     & ~ Q^{k}_h(\phi) \\
    = & ~ (1-\alpha_{t}) \cdot Q_{h}^{k_t}(\phi) + \alpha_{t} \cdot \left(r_h(x_h^{k_t},a_h^{k_t}) + V_{h+1}^{k_t}(x_{h+1}^{k_t}) + b_{t}\right) \\
    = & ~ (1-\alpha_{t}) \cdot \left(
    (1-\alpha_{t-1}) \cdot Q_h^{k_{t-1}}(\phi) + \alpha_{t-1}\cdot \left(r_h(x_{h}^{k_{t-1}}, a_{h}^{k_{t-1}}) + V_{h+1}^{k_{t-1}}(x_{h+1}^{k_{t-1}}) + b_{t-1} \right) 
    \right)  \\
    & ~ + \alpha_{t} \cdot \left(r_h(x_h^{k_t},a_h^{k_t}) + V_{h+1}^{k_t}(x_{h+1}^{k_t}) + b_{t}\right) \\
    = & ~ \dots\\
    = & ~ \prod_{i=1}^t (1-\alpha_{i}) H + \sum_{i=1}^t \alpha_{i}\prod_{j=i+1}^t (1-\alpha_{j})\left( 
    r_h(x_h^{k_i},a_h^{k_i}) + V_{h+1}^{k_{i}}(x_{h+1}^{k_i}) + b_{i}
    \right) \\
    = & ~ \alpha_{ t }^0 \cdot H + \sum_{i=1}^{ t } \alpha_t^i \cdot \left( r_h(x_h^{k_i},a_h^{k_i}) + V_{h+1}^{k_i}(x_{h+1}^{k_i}) +b_i \right)
\end{align*}
where the first step follows from the update rule of $Q$ in Line~\ref{line:update_Q}, the second step follows from the update rule for $Q_h^{k_{t-1}}$,  the third step follows from recursively representing $Q_h^{k_t}$ using $Q_h^{k_{t-1}}$ until $t=1$, and the last step follows from the definition of $a_t^0$ and $a_{t}^i$.
\end{proof}

\subsection{Proof of Lemma~\ref{lemma:Q_progress}}

Before diving into the detailed proof, we provide some notations below. For any $V:{\cal S}\to \R$, we define $[\P_h V]: {\cal S}\times{\cal A} \to \R$ as $[\P_h V](x,a) \triangleq \E_{x'\sim \P_h(\cdot|x,a)} [V(x')]$. At $k$-th episode and time step $h$, we define $[\hat{\P}_h^{k} V](x_h^k, a_h^k) \triangleq V(x_{h+1}^{k})$.

\begin{proof}

Since $\sum_{i=0}^t \alpha_t^i = 1$, we have that $Q^*_h(x,a) = \alpha_t^0 Q_h^*(x,a) + \sum_{i=1}^t \alpha_t^i Q_h^*(x,a)$.

By assumption~\ref{assum:lipschitz} and the fact that $\N_{\epsilon}$ is an $\epsilon$-net, we have:
\begin{align}
\label{eq:net_approx}
|Q_h^*(x,a) - Q_h^*(\phi(x,a))| \leq D((x,a), (\phi(x,a))) \leq \epsilon.
\end{align} for any $x,a,h$. 
Hence: 
\begin{align*}
Q^*_h(\phi(x,a)) = \alpha_t^0 \cdot Q_h^*(\phi(x,a)) + \sum_{i=1}^t \alpha_t^i \cdot Q_h^*(\phi(x,a)) = \alpha_t^0 \cdot Q_h^*(\phi(x,a)) + \sum_{i=1}^t \alpha_t^i \cdot Q_h^*(\phi(x_h^{k_i},a_h^{k_i})) .
\end{align*}
where the last step follows from $\phi(x,a) = \phi(x_h^{k_i} , a_h^{k_i})$, $\forall i \in [t]$.

Then we have 
\begin{align}
\label{eq:Q_range}
Q^*_h(\phi(x,a)) \in \left[ \alpha_t^0 Q_h^*(\phi(x,a)) + \sum_{i=1}^t \alpha_t^i\left( Q_h^*(x_h^{k_i},a_h^{k_i}) - \epsilon\right), \;\;  \alpha_t^0 Q_h^*(\phi(x,a)) + \sum_{i=1}^t \alpha_t^i \left(Q_h^*(x_h^{k_i},a_h^{k_i}) + \epsilon\right)\right],
\end{align}where we used Inequality~\eqref{eq:net_approx} above.

Now for $Q_h^*(x_h^{k_i}, a_h^{k_i})$, by Bellman equation, we have $Q_h^*(x_h^{k_i}, a_h^{k_i}) = r_h(x_h^{k_i},a_h^{k_i}) + [\P_h V_{h+1}^*](x_h^{k_i},a_h^{k_i})$. Recall $[\hat{\P}_h^{k_i} V_{h+1}](x_h^{k_i},a_h^{k_i} ) = V_{h+1}(x_{h+1}^{k_i})$, we have:
\begin{align*}
    Q_h^*(x_h^{k_i},a_h^{k_i}) = r_h(x_h^{k_i},a_h^{k_i}) + [(\P_h - \hat{\P}_h^{k_i}) V_{h+1}^*] (x_h^{k_i}, a_h^{k_i}) + V_{h+1}^*(x_{h+1}^{k_i}).
\end{align*} 
Substitute the above equality into Eq.~\eqref{eq:Q_range}, we have:
\begin{align*}
    Q^*_h(\phi(x,a)) = \alpha_t^0 Q_h^*(\phi(x,a)) + \sum_{i=1}^t \alpha_t^i \left( r_h(x_h^{k_i},a_h^{k_i}) + [ (\P_h - \hat{\P}_h^{k_i}) V_{h+1}^* ] (x_h^{k_i}, a_h^{k_i}) + V_{h+1}^*(x_{h+1}^{k_i})\pm \epsilon \right) 
\end{align*}

Subtracting the formula in Lemma~\ref{lemma:q_formula} from this equation, we have:
\begin{align*}
     & ~Q_h^k(\phi(x,a)) - Q^*_h(\phi(x,a)) \\
     = & ~ \alpha_t^0(H - Q^*_h(\phi(x,a))) + \sum_{i=1}^t \alpha_t^i \left( (V_{h+1}^{k_i} - V_{h+1}^*)(x_{h+1}^{k_i}) + [ (\hat{\P}_h^{k_i} - \P_h)V_{h+1}^* ] (x_h^{k_i},a_h^{k_i}) + b_i \pm \epsilon
    \right) .
\end{align*}
\end{proof}

\subsection{Proof of Lemma~\ref{lemma:upper_lower_Q}}
\label{app:upper_lower}.

\begin{proof}

{\bf Upper bound $( Q_h^k - Q_h^* )( \phi( x,a ) )$.}

For each fixed $(\phi(x,a),h) \in \N_{\epsilon} \times [H]$, 
 let us define $k_i$ as the episode of which $\phi(x,a)$ was encountered as step $h$ for the $i$-th time. 

Denote $\E_{i}$ as the conditional expectation conditioned on all information till step $h$ episode $k_i$. Then, we have:
\begin{align*}
    \E_{i}\left[ [(\hat{\P}_h^{k_i} - \P) V_{h+1}^*](x,a)   \right] = 0.
\end{align*}
Hence, 
\begin{align*}
\Big\{ [ ( \wh{\P}_h^{k_i} - \P ) V_{h+1}^* ] (x,a) \Big\}_{i=1}^{\tau} 
\end{align*}
is a martingale difference sequence. 

Note $N = |\N_{\epsilon}|$. 
By Azuma-Hoeffding and a union bound over all $K$ episodes, we have that with probability at least $1- p / ( N H )$
\begin{align*}
\forall \tau \in [K] : \left| \sum_{i=1}^{\tau} \alpha_{\tau}^i \cdot [ ( \wh{\P}_h^{k_i} - \P_h ) V_{h+1}^* ] (x,a) \right| 
\leq & ~ \frac{ c H }{ 2 } \left( \sum_{i=1}^{\tau} ( \alpha_{\tau}^i )^2 \cdot \gamma \right)^{1/2} \\
\leq & ~ c \cdot \sqrt{ \frac{ H^3 \gamma }{ \tau } }
\end{align*}
for some absolute constant $c$, with $\gamma = \log(NT/p)$. To get the above result, we used the fact that $V_{h}^*(x) \leq H$ for any $h,x$, and $\sum_{i=1}^{\tau} (\alpha_{\tau}^i)^2 \leq 2H/\tau$ for any $\tau \geq 1$ (see Lemma 4.1 in \cite{jabj18}).

Because inequality holds for all fixed $\tau \in [K]$ uniformly, it also holds for $\tau = t = n_{h,k}(\phi(x,a)) \leq K$. 
Putting it all together, and using a union bound over the points in the $\epsilon$-net $\N_{\epsilon}$ and all time steps, we see that with at least $1-p$ probability, the following holds : for all $\phi(x,a),h,k \in {\cal N}_{\epsilon} \times  [H] \times [K]$:
\begin{align*}
\left| \sum_{i=1}^t \alpha_t^i \cdot [ ( \wh{\P}_h^{k_i} - \P_h ) V_{h+1}^* ] (x,a) \right| \leq c \sqrt{ H^3 \gamma / t }, \text{~~~~~where~} t = n_{h,k} ( \phi( x,a) )
\end{align*} where $\gamma = \log( N T / p )$. 

On the other hand, if we choose $b_t = c \sqrt{ H^3 \gamma /t }$ for the same constant $c$ in the equation above, then we have
\begin{align}\label{eq:def_beta_t}
\beta_t / 2 = \sum_{i=1}^t \alpha_t^i b_i \in [ c \sqrt{ H^3 \gamma /t } , 2 c \sqrt{ H^3 \gamma /t } ],
\end{align} where we used the fact that $1/\sqrt{t} \leq \sum_{i=1}^t \alpha_t^i / \sqrt{i} \leq 2/\sqrt{t}$ (see Lemma 4.1 in \cite{jabj18}).

We have
\begin{align*}
    & ~ (Q_h^k - Q^*_h)(\phi(x,a)) \\
    \leq & ~ \alpha_t^0 \cdot ( H - Q_h^*(\phi(x,a)) ) + \sum_{i=1}^t \alpha_t^i \cdot \left( ( V_{h+1}^{k_i} - V_{h+1}^{*} ) ( x^{k_i}_{h+1} ) + [ ( \wh{\P}_h^{k_i} - \P_h ) V_{h+1}^* ] (x_h^{k_i},a_h^{k_i}) + b_i + \epsilon \right) \\
    \leq & ~ \alpha_t^0(H - Q_h^*(\phi(x,a))) + \sum_{i=1}^t \alpha_t^i ( (V_{h+1}^{k_i} - V_{h+1}^*)(x_{h+1}^{k_i}) + b_i + \epsilon ) + c\sqrt{H^3\gamma/t}\\
    \leq & ~ \alpha_t^0 H + \beta_t + \epsilon + \sum_{i=1}^t \alpha_t^i (V_{h+1}^{k_i} - V_{h+1}^*)(x_{h+1}^{k_i}),
\end{align*}
where the first step follows from Lemma~\ref{lemma:Q_progress}, and the last step follows from $\sum_{i=1}^t \alpha_t^i \leq 1$.

Thus, we complete the proof of Eq.~\eqref{eq:upper_bound_Q_minus_Q_*}.

{\bf Lower bound $( Q_h^k - Q_h^* )( \phi( x,a ) )$.}

To prove Eq.~\eqref{eq:lower_bound_Q_minus_Q_*}, we use induction. 
By definition, we have $Q_{H+1}^k = Q_{H+1}^* = 0$, which implies that $Q_{H+1}^k(\phi(x,a)) - Q_{H+1}^k(\phi(x,a))  = 0 = -2(H-(H+1)+1)\epsilon$. Assume that $Q_{h+1}^k(\phi(x,a)) \geq Q_{h+1}^*(\phi(x,a)) -2(H - (h+1) + 1)\epsilon$ for any $x,a$.  

First, for $V_{h+1}^{k}(x_{h+1}^k) - V_{h+1}^*(x_{h+1}^k)$, we have:
\begin{align*}
    & ~ (V_{h+1}^k - V_{h+1}^*)(x_{h+1}^k) \\
    = & ~ \min \left\{ H, \max_{a'\in{\cal A}}Q_{h+1}^k(\phi(x_{h+1}^k,a')) \right\} - \max_{a\in{\cal A}}Q^*_{h+1}(x_{h+1}^k, a)\\
    \geq & ~ \max_{a' \in {\cal A} } Q_{h+1}^k(f(x_{h+1}^k, a')) - \max_{a\in {\cal A}}Q_{h+1}^*(x_{h+1}^k,a) \\
    \geq & ~ Q_{h+1}^k(f(x_{h+1}^k, \pi^*(x_{h+1}^k))) - Q_{h+1}^*(x_{h+1}^k,\pi^*(x_{h+1}^k))   \\
    \geq & ~ Q_{h+1}^k(\phi(x_{h+1}^k, \pi^*(x_{h+1}^k))) -\epsilon - Q_{h+1}^*(\phi(x_{h+1}^k, \pi^*(x_{h+1}^k))) \\
    \geq & ~ -2(H-h)\epsilon -\epsilon,
\end{align*}
where the second step follows from taking the second term as the argmin (otherwise the result trivially follows as $H \geq Q_{h+1}^*(x,a)$), the third step follows from $\arg\max_a Q_{h+1}^* ( x_{h+1}^k , a ) = \pi^* ( x_{h+1}^k )$ (by definition of $\pi^*$), the fourth step follows uses the fact that $\N_{\epsilon}$ is a $\epsilon$-net and the assumption~\ref{assum:lipschitz}, and the last step follows from induction hypothesis.

Now, we have:
\begin{align*}
    & ~ Q_h^k(\phi(x,a)) - Q_h^*(\phi(x,a)) \\
    \geq & ~ \sum_{i=1}^t \alpha_t^i \cdot \left( ( V_{h+1}^{k_i} - V_{h+1}^{*} ) ( x^{k_i}_{h+1} ) + [ ( \wh{\P}_h^{k_i} - \P_h ) V_{h+1}^* ] (x_h^{k_i},a_h^{k_i}) + b_i - \epsilon \right)\\
    \geq & ~ \sum_{i=1}^t \alpha_t^i \cdot \left( (V_{h+1}^{k_i} - V_{h+1}^*)(x_{h+1}^{k_i}) - \epsilon \right) \\
    \geq & ~ - \left( \sum_{i=1}^t \alpha_t^i (2(H-h)\epsilon + \epsilon) \right) - \epsilon \\
    \geq & ~ -2(H-h)\epsilon - 2\epsilon\\
     = & ~ -2(H-h+1)\epsilon,
\end{align*} 
where the first step we used the fact that $H \geq Q^*_h(f(x,a))$, the second step uses the fact that $ \sum_{i=1}^t \alpha_t^i [ ( \wh{\P}_h^{k_i} - \P_h ) V_{h+1}^* ] (x_h^{k_i},a_h^{k_i}) \geq -c\sqrt{H^3\gamma/t} \geq -\sum_{i=1}^t\alpha_t^i b_i$, and the third step uses the result that $(V_{h+1}^{k_i} - V_{h+1}^*)(x_{h+1}^{k_i}) \geq -2(H-h)\epsilon - \epsilon$, where the fourth step uses the fact that $\sum_{i=1}^t \alpha_t^i \leq 1$.

Thus, we prove Eq.~\eqref{eq:lower_bound_Q_minus_Q_*}. We complete the proof.

\end{proof}

\subsection{Proof of Lemma~\ref{lemma:V_upper_lower}}
\label{app:v_upper_lower}
\begin{proof}
For any $h,x$, by the definition of $V_h^k$ from Alg~\ref{alg:NBQL}, we have:
\begin{align*}
    V_h^k(x) - V_h^*(x) 
    = & ~ \min\{H, \max_{a}Q_h^k(\phi(x,a))\} - \max_a Q_h^*(x,a) \\
    = & ~ \max_a Q_h^k(\phi(x,a)) - \max_a Q_h^*(x,a) \\
    \geq & ~ \max_a Q_h^k(\phi(x,a)) - \max_a Q_h^*(\phi(x,a)) - \epsilon\\
    \geq & ~ Q_h^k(\phi(x,a^*)) - Q_h^*(\phi(x,a^*)) - \epsilon \\
    \geq & ~ -2(H - h + 1)\epsilon - \epsilon,
\end{align*} where the second equality takes the min at the second term (otherwise the result follows as $Q_h^*(x,a)\leq H$ for any $(x,a)$), the first inequality comes from the $\epsilon$-net construction, and in the second inequality we denote $a^*=\arg\max_a Q_h^*(\phi(x,a))$, while the last inequality uses the lower bound in Lemma~\ref{lemma:upper_lower_Q}.

For upper bound, we have:
\begin{align*}
    V_h^k(x) - V_h^*(x) 
    = & ~ \min\{H, \max_{a}Q_h^k(\phi(x,a))\} - \max_a Q_h^*(x,a) \\
    \leq & ~ \max_{a}Q_h^k(\phi(x,a)) - \max_a Q_h^*(x,a)\\
    \leq & ~ Q_h^k(\phi(x,a^k)) - Q_h^*(x,a^k) \\
    \leq & ~ Q_h^k(\phi(x,a^k)) - Q_h^*(\phi(x,a^k)) + \epsilon \\
    \leq & ~  \alpha_t^0 \cdot H + \beta_t + 2 \epsilon + \sum_{i=1} ^t \alpha_t^i \cdot ( V_{h+1}^{k_i} - V_{h+1}^* ) ( x_{h+1}^{k_i} ),
\end{align*} where in the first inequality we denote $a^k = \arg\max_a Q_h^k(\phi(x,a))$ and $Q_h^*(x,a^k) \leq \max_a Q_h^*(x,a)$, and in the second inequality, we use $\epsilon$-net construction, and the last inequality uses the upper bound result in lemma~\ref{lemma:upper_lower_Q}. 

\end{proof}







\end{document}